\newcommand{\R}{\mathbb{R}}
\newcommand{\Rd}{\mathbb{R}^d}
\newcommand{\Rdd}{\mathbb{R}^{d \times d}}
\newcommand{\PRd}{\mathscr{P}(\mathbb{R}^d)}
\newcommand{\PpRd}{\mathscr{P}_p(\mathbb{R}^d)}
\newcommand{\PdRd}{\mathscr{P}_2(\mathbb{R}^d)}
\newcommand{\ProjPWasserstein}[1]{\mathcal{P}_{#1}}
\newcommand{\ProjWasserstein}[1]{\mathcal{S}_{#1}}
\newcommand{\Wd}{\mathcal{W}}
\newcommand{\N}{\mathbb{N}}
\newcommand{\G}{\mathcal{G}}
\DeclareMathOperator*{\argmin}{arg\,min}
\DeclareMathOperator{\diag}{diag}
\DeclareMathOperator{\tr}{trace}
\DeclareMathOperator{\Id}{Id}
\newcommand{\dotp}[2]{\ensuremath{\langle #1 , #2\,\rangle}}
\def\ones{\mathbf{1}}
\def\zeros{\mathbf{0}}
\newtheorem{theorem}{Theorem}
\newtheorem{remark}{Remark}
\newtheorem{proposition}{Proposition}
\newtheorem{definition}{Definition}
\newtheorem{lemma}{Lemma}
\newcommand{\range}[1]{\llbracket#1\rrbracket}
\icmltitlerunning{Subspace Robust Wasserstein Distances}
\begin{document}

\twocolumn[
\icmltitle{Subspace Robust Wasserstein Distances}

\begin{icmlauthorlist}
\icmlauthor{Fran\c{c}ois-Pierre Paty}{ensae}
\icmlauthor{Marco Cuturi}{google,ensae}
\end{icmlauthorlist}

\icmlaffiliation{ensae}{CREST-ENSAE, Palaiseau, France}
\icmlaffiliation{google}{Google Brain, Paris, France}

\icmlcorrespondingauthor{Fran\c{c}ois-Pierre Paty}{francois.pierre.paty@ensae.fr}

\icmlkeywords{Machine Learning, Optimal Transport, Wasserstein, ICML}

\vskip 0.3in
]



\printAffiliationsAndNotice{}

\begin{abstract}
Making sense of Wasserstein distances between discrete measures in high-dimensional settings remains a challenge. Recent work has advocated a two-step approach to improve robustness and facilitate the computation of optimal transport, using for instance projections on random real lines, or a preliminary quantization of the measures to reduce the size of their support. We propose in this work a ``max-min'' robust variant of the Wasserstein distance by considering the maximal possible distance that can be realized between two measures, assuming they can be projected orthogonally on a lower $k$-dimensional subspace. Alternatively, we show that the corresponding ``min-max'' OT problem has a tight convex relaxation which can be cast as that of finding an optimal transport plan with a low transportation cost, where the cost is alternatively defined as the sum of the $k$ largest eigenvalues of the second order moment matrix of the displacements (or matchings) corresponding to that plan (the usual OT definition only considers the trace of that matrix). We show that both quantities inherit several favorable properties from the OT geometry. We propose two algorithms to compute the latter formulation using entropic regularization, and illustrate the interest of this approach empirically.
\end{abstract}
\section{Introduction}
The optimal transport (OT) toolbox~\citep{Villani09} is gaining popularity in machine learning, with several applications to data science outlined in the recent review paper~\citep{COTFNT}. When using OT on high-dimensional data, practitioners are often confronted to the intrinsic instability of OT with respect to input measures. A well known result states for instance that the sample complexity of Wasserstein distances can grow exponentially in dimension~\citep{dudley1969speed,fournier2015rate}, which means that an irrealistic amount of samples from two continuous measures is needed to approximate faithfully the true distance between them. This result can be mitigated when data lives on lower dimensional manifolds as shown in~\citep{weed2017sharp}, but sample complexity bounds remain pessimistic even in that case. From a computational point of view, that problem can be interpreted as that of a lack of robustness and instability of OT metrics with respect to their inputs. This fact was already a common concern of the community when these tools were first adopted, as can be seen in the use of $\ell_1$ costs~\citep{TreeEMD2007} or in the common practice of thresholding cost matrices~\citep{Pele-iccv2009}.

\begin{figure*}[!ht]
	\includegraphics[width=\textwidth]{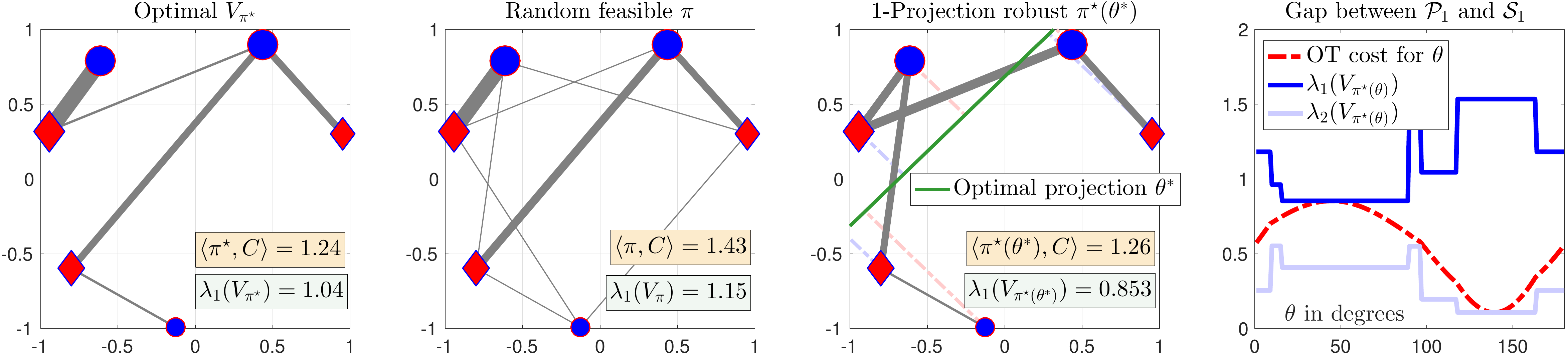}
	\caption{We consider two discrete measures (red and blue dots) on the plane. The left-most plot shows the optimal transport between these points, in which the width of the segment is proportional to the mass transported between two locations. The total cost is displayed in the lower right part of the plot as $\dotp{\pi^\star}{C}$, where $C$ is the pairwise squared-Euclidean distance matrix. The largest eigenvalue of the corresponding second order moment matrix $V_{\pi^\star}$ of displacements, see~\eqref{eq:vpi}, is given below. As can be expected and seen in the second plot, choosing a random transportation plan yields a higher cost. The third plot displays the most robust projection direction (green line), that upon which the OT cost of these point clouds is largest once projected. The maximal eigenvalue of the second order moment matrix (still in dimension 2) is smaller than that obtained with the initial OT plan. Finally, we plot as a function of the angle $\theta$ between $(0,180)$ the OT cost (which, in agreement with the third plot, is largest for the angle corresponding to the green line of the third plot) as well as the corresponding maximal eigenvalue of the second order moment of the optimal plan corresponding to \emph{each} of these angles $\theta$. The maximum of the red curve, as well as the minimum reached by the dark blue one, correspond respectively to the values of the projection $\ProjPWasserstein{k}$ and subspace $\ProjWasserstein{k}$ robust Wasserstein distances described in \S3. They happen to coincide in this example, but one may find examples in which they do not, as can be seen in Figure~\ref{fig:lafigurenulle} (supplementary material). The smallest eigenvalue is given for illustrative purposes only.}\label{fig:lasuperfigure}
\end{figure*}

\paragraph{Regularization} The idea to trade off a little optimality in exchange for more regularity is by now considered a crucial ingredient to make OT work in data sciences. A line of work initiated in~\citep{CuturiSinkhorn} advocates adding an entropic penalty to the original OT problem, which results in faster and differentiable quantities, as well as improved sample complexity bounds~\citep{genevay19}. Following this, other regularizations~\citep{dessein2016regularized}, notably quadratic~\citep{blondel2018smooth}, have also been investigated. Sticking to an entropic regularization, one can also interpret the recent proposal by~\citet{altschuler2018massively} to approximate Gaussian kernel matrices appearing in the regularized OT problem with Nystr\"om-type factorizations (or exact features using a Taylor expansion~\citep{cotter2011explicit} as in~\citep{altschuler2018approximating}), as robust approaches that are willing to tradeoff yet a little more cost optimality in exchange for faster Sinkhorn iterations. In a different line of work, quantizing first the measures to be compared before carrying out OT on the resulting distriutions of centroids is a fruitful alternative~\citep{CanasRosasco} which has been recently revisited in~\citep{weed19}. Another approach exploits the fact that the OT problem between two distributions on the real line boils down to the direct comparison of their generalized quantile functions~\citep[\S2]{SantambrogioBook}. Computing quantile functions only requires sorting values, with a mere log-linear complexity. The \emph{sliced} approximation of OT~\cite{rabin-ssvm-11} consists in projecting two probability distributions on a given line, compute the optimal transport cost between these projected values, and then repeat this procedure to average these distances over several random lines. This approach can be used to define kernels~\citep{kolouri2016sliced}, compute barycenters~\citep{2013-Bonneel-barycenter} but also to train generative models~\citep{kolouri2018sliced,Deshpande_2018_CVPR}. Beyond its practical applicability, this approach is based on a perhaps surprising point-of-view: OT on the real line may be sufficient to extract geometric information from two high-dimensional distributions. Our work builds upon this idea, and more candidly asks what can be extracted from a little more than a real line, namely a subspace of dimension $k\geq 2$. Rather than project two measures on several lines, we consider in this paper projecting them on a $k$-dimensional subspace that maximizes their transport cost. This results in optimizing the Wasserstein distance over the ground metric, which was already considered for supervised learning~\citep{cuturi2014ground,flamary2018wasserstein}.


\paragraph{Contributions} This optimal projection translates into a ``max-min'' robust OT problem with desirable features. Although that formulation cannot be solved with convex solvers, we show that the corresponding ``min-max'' problem admits on the contrary a tight convex relaxation and also has an intuitive interpretation. To see that, one can first notice that the usual 2-Wasserstein distance can be described as the minimization of the \emph{trace} of the second order moment matrix of the displacements associated with a transport plan. We show that computing a maximally discriminating optimal $k$ dimensional subspace in this ``min-max'' formulation can be carried out by minimizing the sum of the $k$ largest eigenvalues (instead of the entire trace) of that second order moment matrix. A simple example summarizing the link between these two ``min-max'' and ``max-min'' quantities is given in Figure~\ref{fig:lasuperfigure}. That figure considers a toy example where points in dimension $d=2$ are projected on lines $k=1$, our idea is designed to work for larger $k$ and $d$, as shown in~\S\ref{sec:exp}.

\paragraph{Paper structure} We start this paper with background material on Wasserstein distances in \S\ref{sec:background} and present an alternative formulation for the 2-Wasserstein distance using the second order moment matrix of displacements described in a transport plan. We define in \S\ref{sec:definitions} our ``max-min'' and ``min-max'' formulations for, respectively, projection (PRW) and subspace (SRW) robust Wasserstein distances. We study the geodesic structure induced by the SRW distance on the space of probability measures in \S\ref{sec:geometry}, as well as its dependence on the dimension parameter $k$. We provide computational tools to evaluate SRW using entropic regularization in \S\ref{sec:algos}. We conclude the paper with experiments in \S\ref{sec:exp} to validate and illustrate our claims, on both simulated and real datasets.

\vskip1cm

\section{Background on Optimal Transport}\label{sec:background}

For $d \in \N$, we write $\range{d} = \{1, ..., d\}$. Let $\PRd$ be the set of Borel probability measures in $\mathbb{R}^d$, and let 
\[
    \PdRd = \left\{ \mu \in \PRd \,\bigg|\, \int \|x\|^2 \,d\mu(x) < \infty \right\}.
\]

\paragraph{Monge and Kantorovich Formulations of OT}

For $\mu, \nu \in \PRd$, we write $\Pi(\mu, \nu)$ for the set of couplings
\begin{multline*}
        \Pi(\mu, \nu) = \{ \pi \in \mathscr{P}(\Rd \times \Rd) \textrm{ s.t.} \,\forall A, B\subset\Rd \text{ Borel},\\
        \pi(A\times \Rd)=\mu(A), \pi(\Rd \times B)=\nu(B) \},
\end{multline*}
and their $2$-Wasserstein distance is defined as
\[
    \mathcal{W}_2(\mu, \nu) := \left( \inf_{\pi \in \Pi(\mu, \nu)} \int \|x-y\|^2 \,d\pi(x,y) \right)^{1/2}.
\]
Because we only consider quadratic costs in the remainder of this paper, we drop the subscript $2$ in our notation and will only use $\Wd$ to denote the $2$-Wasserstein distance. For Borel $\mathcal{X}, \mathcal{Y} \subset \Rd$, Borel $T : \mathcal{X} \to \mathcal{Y}$ and $\mu \in \mathscr{P}(\mathcal{X})$, we denote by $T_\#\mu \in \mathscr{P}(\mathcal{Y})$ the push-forward of $\mu$ by $T$, \emph{i.e.} the measure such that for any Borel set $A \subset \mathcal{Y}$,
\[
	T_\#\mu(A) = \mu\left( T^{-1}(A) \right).
\]
The~\citet{Monge1781} formulation of optimal transport is, when this minimization is feasible, equivalent to that of Kantorovich, namely 
\[
    \Wd(\mu, \nu) = \left( \inf_{T: T_\#\mu=\nu} \int \|x-T(x)\|^2 \,d\mu(x) \right)^{1/2}.
\]

\paragraph{$\Wd$ as Trace-minimization}
For any coupling $\pi$, we define the $d\times d$ second order displacement matrix
\begin{equation}\label{eq:vpi}
	V_\pi := \int (x-y)(x-y)^T d\pi(x,y).
\end{equation}
Notice that when a coupling $\pi$ corresponds to a Monge map, namely $\pi=(\Id,T)_\#\mu$, then one can interpret even more naturally $V_\pi$ as the second order moment of all displacement $(x-T(x))(x-T(x))^T$ weighted by $\mu$. With this convention, we remark that the total cost of a coupling $\pi$ is equal to the trace of $V_\pi$, using the simple identity $\tr(x-y)(x-y)^T=\|x-y\|^2$ and the linearity of the integral sum.  Computing the $\Wd$ distance can therefore be interpreted as minimizing the trace of $V_\pi$. This simple observation will play an important role in the next section, and more specifically the study of $\lambda_l(V_\pi)$, the $l$-th \emph{largest} eigenvalue of $V_\pi$.
\section{Subspace Robust Wasserstein Distances}\label{sec:definitions}
With the conventions and notations provided in \S2, we consider here different robust formulations of the Wasserstein distance. Consider first for $k \in \range{d}$, the Grassmannian of $k$-dimensional subspaces of $\Rd$ : 
\[
	\G_k = \left\{ E \subset \Rd \,|\, \textrm{dim}(E) = k \right\}.
\]
For $E\in \G_k$, we note $P_E$ the orthogonal projector onto $E$. Given two measures $\mu,\nu \in \PdRd$, a first attempt at computing a robust version of $\Wd(\mu,\nu)$ is to consider the worst possible OT cost over all possible low dimensional projections:
\begin{definition}
For $k \in \range{d}$, the $k$-dimensional projection robust $2$-Wasserstein (PRW) distance between $\mu$ and $\nu$ is
\[
	\ProjPWasserstein{k}(\mu, \nu) = \sup_{E \in \G_k} \Wd \left( {P_E}_\#\mu , {P_E}_\#\nu \right).
\]	
\end{definition}
As we show in the supplementary material, this quantity is well posed and itself worthy of interest, yet difficult to compute. In this paper, we focus our attention on the corresponding ``min-max'' problem, to define the $k$-dimensional subspace robust $2$-Wasserstein (SRW) distance:
\
\begin{definition}
For $k \in \range{d}$, the $k$-dimensional subspace robust $2$-Wasserstein distance between $\mu$ and $\nu$ is
\[
	\begin{split}\label{eq:minPmaxE}
		 \ProjWasserstein{k}(\mu, \nu) &= \adjustlimits\inf_{\pi \in \Pi(\mu , \nu)} \sup_{E \in \G_k} \left[ \int \|P_E(x-y)\|^2 d\pi(x,y) \right]^{1/2} 
	\end{split}
\]
\end{definition}

\begin{remark}
Both quantities $\ProjWasserstein{k}$ and $\ProjPWasserstein{k}$ can be interpreted as robust variants of the $\Wd$ distance. By a simple application of weak duality we have that $\ProjPWasserstein{k}(\mu, \nu) \leq  \ProjWasserstein{k}(\mu, \nu)$.
\end{remark}

\begin{lemma} \label{lemma:infsup_is_minmax} Optimal solutions for $\ProjWasserstein{k}$ exist, \emph{i.e.}
\[
	\ProjWasserstein{k}(\mu, \nu) = \adjustlimits\min_{\pi \in \Pi(\mu , \nu)} \max_{E \in \G_k} \left[ \int \|P_E(x-y)\|^2 d\pi(x,y) \right]^{1/2}
\]
\end{lemma}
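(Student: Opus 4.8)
The plan is to pass from $\inf\sup$ to $\min\max$ by a standard compactness-plus-semicontinuity argument. Write $F(\pi) := \sup_{E\in\G_k}\int \|P_E(x-y)\|^2\,d\pi(x,y)$, so that $\ProjWasserstein{k}(\mu,\nu) = \big(\inf_{\pi\in\Pi(\mu,\nu)} F(\pi)\big)^{1/2}$; since $t\mapsto t^{1/2}$ is continuous and increasing, it suffices to show that this infimum is attained and that, for each $\pi$, the inner supremum is attained. For the latter, I would first record that $\|P_E(x-y)\|^2 = \tr\!\big(P_E (x-y)(x-y)^T\big)$, so by linearity of the integral $\int\|P_E(x-y)\|^2\,d\pi = \dotp{P_E}{V_\pi}$, a quantity that is finite because $\mu,\nu\in\PdRd$ forces $\tr V_\pi = \int\|x-y\|^2\,d\pi \le 2\int\|x\|^2\,d\mu + 2\int\|y\|^2\,d\nu < \infty$ and $0\preceq P_E\preceq\Id$. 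Since $E\mapsto P_E$ is continuous on the compact Grassmannian $\G_k$ and $P_E\mapsto\dotp{P_E}{V_\pi}$ is linear, the supremum is a maximum; equivalently, by the Ky Fan maximum principle it equals $\sum_{l=1}^k \lambda_l(V_\pi)$, attained at the projector onto a top-$k$ eigenspace of $V_\pi$.

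Next I would establish that $\Pi(\mu,\nu)$ is compact for the topology of weak convergence. It is tight: $\mu$ and $\nu$ are tight, so given $\varepsilon>0$ one picks compacts $K,L\subset\Rd$ with $\mu(\Rd\setminus K),\nu(\Rd\setminus L)<\varepsilon$, and then $\pi\big(\Rd\times\Rd\setminus(K\times L)\big)\le\mu(\Rd\setminus K)+\nu(\Rd\setminus L)<2\varepsilon$ for every $\pi\in\Pi(\mu,\nu)$. It is also weakly closed, since the marginal constraints pass to the limit by testing against functions of the form $\varphi(x)$ and $\psi(y)$ with $\varphi,\psi$ bounded continuous. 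Prokhorov's theorem then gives weak compactness, and $\Pi(\mu,\nu)$ is nonempty (it contains $\mu\otimes\nu$).

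Finally I would check that $F$ is lower semicontinuous on $\Pi(\mu,\nu)$. For each fixed $E$, the function $(x,y)\mapsto\|P_E(x-y)\|^2$ is nonnegative and continuous, so by the portmanteau theorem $\pi\mapsto\int\|P_E(x-y)\|^2\,d\pi$ is lower semicontinuous for weak convergence; $F$, being a pointwise supremum of such functions, is lower semicontinuous as well. A lower semicontinuous function on a nonempty compact set attains its infimum, so there exists $\pi^\star\in\Pi(\mu,\nu)$ with $F(\pi^\star)=\inf_\pi F(\pi)$; together with the first step this yields the announced $\min\max$ identity.

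There is no deep obstacle, but the one point that needs care is that the integrand $(x-y)(x-y)^T$ defining $V_\pi$ is unbounded, so one cannot simply invoke weak continuity of $\pi\mapsto V_\pi$. Passing through lower semicontinuity (portmanteau for nonnegative continuous integrands, and ``a supremum of l.s.c.\ functions is l.s.c.'') avoids this entirely. Alternatively, one could prove genuine weak continuity of $\pi\mapsto V_\pi$ on $\Pi(\mu,\nu)$ by noting that fixed marginals make the family $\{\|x\|^2+\|y\|^2\}$ uniformly $\pi$-integrable, which even gives continuity (not merely lower semicontinuity) of $F$; but for this lemma semicontinuity plus compactness is all that is required.
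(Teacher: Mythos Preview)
Your proof is correct and follows essentially the same approach as the paper's: continuity in $E$ over the compact Grassmannian gives the inner maximum, lower semicontinuity of $\pi\mapsto\int\|P_E(x-y)\|^2\,d\pi$ together with ``a supremum of l.s.c.\ functions is l.s.c.'' gives lower semicontinuity of $F$, and tightness/compactness of $\Pi(\mu,\nu)$ yields the outer minimum. The paper's proof is a terse two-line version of exactly this argument; your additional care about integrability and the unboundedness of the integrand is welcome but not a different route.
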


We show next that the SRW variant $\ProjWasserstein{k}$ can be elegantly reformulated as a function of the eigendecomposition of the displacement second-order moment matrix $V_\pi$~\eqref{eq:vpi}: 
\begin{lemma} \label{lemma:dual_formulation}
For $k \in \range{d}$ and $\mu, \nu \in \PdRd$, one has 
\[
	\begin{split}
		\ProjWasserstein{k}^2(\mu, \nu) &= \min_{\pi \in \Pi(\mu, \nu)}  \max_{\substack{U \in \R^{k \times d}\\UU^T = I_k}} \int \|Ux - Uy\|^2 \,d\pi(x,y) \\
		&= \min_{\pi \in \Pi(\mu, \nu)} \sum_{l=1}^k \lambda_l(V_\pi). 
	\end{split}
\]
\end{lemma}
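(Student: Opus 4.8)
The plan is to establish the two equalities in turn, keeping the outer infimum over $\pi$ fixed throughout, so that no minimax exchange is needed. I would start from the form provided by Lemma~\ref{lemma:infsup_is_minmax}, namely $\ProjWasserstein{k}^2(\mu,\nu) = \min_{\pi \in \Pi(\mu,\nu)} \max_{E \in \G_k} \int \|P_E(x-y)\|^2\,d\pi(x,y)$, noting that since $\mu,\nu \in \PdRd$ every $\pi$ has $\int\|x-y\|^2\,d\pi < \infty$ and $\|P_E z\|^2 \le \|z\|^2$, so all integrals below are finite. For the first equality, I would set up the correspondence between $k$-dimensional subspaces and matrices with orthonormal rows: for $E \in \G_k$, pick $U \in \R^{k\times d}$ whose rows form an orthonormal basis of $E$; then $UU^T = I_k$ and $P_E = U^T U$, so $\|P_E z\|^2 = z^T U^T (UU^T) U z = \|Uz\|^2$ for every $z \in \Rd$, a value independent of the chosen basis. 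Conversely, any $U$ with $UU^T = I_k$ arises this way from its own row span, which lies in $\G_k$. Hence, for each fixed $\pi$, the two inner optimizations agree, and taking the minimum over $\pi$ gives $\ProjWasserstein{k}^2(\mu,\nu) = \min_{\pi}\max_{UU^T=I_k}\int\|Ux-Uy\|^2\,d\pi(x,y)$.

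For the second equality, I would first rewrite the inner objective in terms of $V_\pi$: using $\|Uz\|^2 = \tr(U z z^T U^T)$ and linearity of the integral against $\pi$, one gets $\int \|Ux-Uy\|^2\,d\pi(x,y) = \tr(U V_\pi U^T)$ with $V_\pi$ as in~\eqref{eq:vpi}. It then remains to show $\max_{UU^T = I_k} \tr(U V_\pi U^T) = \sum_{l=1}^k \lambda_l(V_\pi)$ for the symmetric positive semidefinite matrix $V_\pi$, which is Ky Fan's maximum principle. I would include the short self-contained argument: diagonalize $V_\pi = \sum_{i=1}^d \lambda_i(V_\pi)\, v_i v_i^T$ in an orthonormal eigenbasis $(v_i)$ ordered by decreasing eigenvalue, set $c_i = \|Uv_i\|^2$, and observe $\tr(U V_\pi U^T) = \sum_i \lambda_i(V_\pi)\, c_i$ where, because $U^T U$ is a rank-$k$ orthogonal projection, $0 \le c_i \le 1$ and $\sum_i c_i = \tr(U^T U) = k$; the largest such weighted sum is $\sum_{l=1}^k \lambda_l(V_\pi)$, attained by taking the rows of $U$ equal to $v_1,\dots,v_k$. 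Chaining this with the previous step, and invoking Lemma~\ref{lemma:infsup_is_minmax} once more to note the outer minimum is attained at the same $\pi$, completes the proof.

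The whole argument is elementary; the only points that need care are checking that $P_E = U^T U$ and that $\|P_E z\| = \|Uz\|$ for \emph{any} orthonormal basis (so the reparametrization of $\G_k$ is well defined), and invoking Ky Fan in its rectangular form — writing $P = U^T$ with $P^T P = I_k$ makes the match with the textbook statement $\sum_{l=1}^k \lambda_l(A) = \max_{P^TP = I_k}\tr(P^T A P)$ immediate. I do not anticipate a genuine obstacle here; the main thing to avoid is mistakenly reading this as a minimax-swap statement, which it is not — the outer $\min_\pi$ stays in place at every step.
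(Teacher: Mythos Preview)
Your proposal is correct and follows essentially the same route as the paper: identify $\G_k$ with $\{U\in\R^{k\times d}:UU^T=I_k\}$ via $P_E=U^TU$ so that $\|P_E z\|^2=\|Uz\|^2$, rewrite the inner objective as $\tr(UV_\pi U^T)$, and apply Ky Fan's variational principle. The only difference is cosmetic --- the paper cites \cite{fan1949theorem} directly while you supply a short self-contained proof of the same fact, and the paper runs the chain of equalities in the reverse order --- but the mathematical content is the same.
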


This characterization as a sum of eigenvalues will be crucial to study theoretical properties of $\ProjWasserstein{k}$. Subspace robust Wasserstein distances can in fact be interpreted as a convex relaxation of projection robust Wasserstein distances: they can be computed as the maximum of a concave function over a convex set, which will make computations tractable.
\begin{theorem} \label{theo:convex_relaxation}
For $k \in \range{d}$ and $\mu, \nu \in \PdRd$,
\begin{align}
	\ProjWasserstein{k}^2(\mu, \nu) &= \min_{\pi \in \Pi(\mu, \nu)} \max_{\substack{0 \preceq \Omega \preceq I\\\tr(\Omega) = k}}  \int d_\Omega^2 \,d\pi \label{eqn:minmaxOmega} \\
	&= \max_{\substack{0 \preceq \Omega \preceq I\\\tr(\Omega) = k}} \min_{\pi \in \Pi(\mu, \nu)} \int d_\Omega^2 \,d\pi \label{eqn:maxOmegamin} \\
	&= \max_{\substack{0 \preceq \Omega \preceq I\\\tr(\Omega) = k}} \Wd^2 \left( {\Omega^{1/2}}_\#\mu , {\Omega^{1/2}}_\#\nu \right) \label{eqn:maxOmegaWass}
\end{align}
where $d_\Omega$ stands for the Mahalanobis distance
\[
	d_\Omega^2(x,y) = (x-y)^T \Omega (x-y).
\]
\end{theorem}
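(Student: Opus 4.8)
The plan is to chain the three equalities, starting from the eigenvalue representation $\ProjWasserstein{k}^2(\mu,\nu)=\min_{\pi\in\Pi(\mu,\nu)}\sum_{l=1}^k\lambda_l(V_\pi)$ supplied by Lemma~\ref{lemma:dual_formulation}. For \eqref{eqn:minmaxOmega}, I would first establish the matrix identity $\sum_{l=1}^k\lambda_l(V)=\max\{\tr(\Omega V):0\preceq\Omega\preceq I,\ \tr(\Omega)=k\}$ valid for every symmetric positive semidefinite $V$: diagonalizing $V=\sum_i\lambda_i(V)\,u_iu_i^T$ in an orthonormal basis $(u_i)$ and writing $c_i:=u_i^T\Omega u_i$, the constraints on $\Omega$ force $c_i\in[0,1]$ and $\sum_ic_i=\tr(\Omega)=k$, so $\tr(\Omega V)=\sum_ic_i\lambda_i(V)\le\sum_{l=1}^k\lambda_l(V)$, with equality at $\Omega=\sum_{l=1}^k u_lu_l^T$. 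Applying this to $V=V_\pi$ and using $\tr(\Omega V_\pi)=\int\tr\bigl(\Omega(x-y)(x-y)^T\bigr)\,d\pi=\int(x-y)^T\Omega(x-y)\,d\pi=\int d_\Omega^2\,d\pi$ (the interchange being licit since $0\le d_\Omega^2\le\|x-y\|^2$ is $\pi$-integrable for $\mu,\nu\in\PdRd$) turns the eigenvalue representation into \eqref{eqn:minmaxOmega}.

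To go from \eqref{eqn:minmaxOmega} to \eqref{eqn:maxOmegamin} I would perform a minimax exchange, made tractable by a finite-dimensional reduction. Since $\pi\mapsto V_\pi$ is affine and $\Pi(\mu,\nu)$ convex, the set $\mathcal V:=\{V_\pi:\pi\in\Pi(\mu,\nu)\}\subset\Rdd$ is convex and bounded (as $\tr(V_\pi)=\int\|x-y\|^2\,d\pi$ is uniformly bounded over $\Pi(\mu,\nu)$), and $\int d_\Omega^2\,d\pi$ depends on $\pi$ only through $\tr(\Omega V_\pi)$. Hence \eqref{eqn:minmaxOmega} equals $\min_{V\in\mathcal V}\max_{\Omega\in\mathcal F}\tr(\Omega V)$ with $\mathcal F:=\{\Omega:0\preceq\Omega\preceq I,\ \tr(\Omega)=k\}$ convex and compact. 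As $(V,\Omega)\mapsto\tr(\Omega V)$ is bilinear and continuous and $\mathcal F$ is compact convex, Sion's minimax theorem gives $\min_{V\in\mathcal V}\max_{\Omega\in\mathcal F}\tr(\Omega V)=\max_{\Omega\in\mathcal F}\inf_{V\in\mathcal V}\tr(\Omega V)$. Translating back, $\inf_{V\in\mathcal V}\tr(\Omega V)=\inf_{\pi\in\Pi(\mu,\nu)}\int d_\Omega^2\,d\pi$ is attained by the standard existence theorem for OT (the cost $d_\Omega^2$ is continuous and nonnegative, $\Pi(\mu,\nu)$ weakly compact), while the outer supremum over $\mathcal F$ is attained since $\Omega\mapsto\inf_{V\in\mathcal V}\tr(\Omega V)$ is a pointwise infimum of linear maps, hence concave and upper semicontinuous, on the compact set $\mathcal F$. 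This yields \eqref{eqn:maxOmegamin}.

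For \eqref{eqn:maxOmegaWass}, fix $\Omega\in\mathcal F$ and note $d_\Omega^2(x,y)=\|\Omega^{1/2}x-\Omega^{1/2}y\|^2$. The map $(x,y)\mapsto(\Omega^{1/2}x,\Omega^{1/2}y)$ pushes any $\pi\in\Pi(\mu,\nu)$ to an element of $\Pi\bigl({\Omega^{1/2}}_\#\mu,{\Omega^{1/2}}_\#\nu\bigr)$ with the same cost, giving "$\ge$"; conversely the gluing lemma produces, from any $\tilde\pi\in\Pi\bigl({\Omega^{1/2}}_\#\mu,{\Omega^{1/2}}_\#\nu\bigr)$, a coupling $\pi\in\Pi(\mu,\nu)$ (the $(x,y)$-marginal of a gluing of $(\Id,\Omega^{1/2})_\#\mu$, $\tilde\pi$ and $(\Omega^{1/2},\Id)_\#\nu$) with $\int\|\Omega^{1/2}x-\Omega^{1/2}y\|^2\,d\pi=\int\|x-y\|^2\,d\tilde\pi$, giving "$\le$". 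Thus $\min_{\pi\in\Pi(\mu,\nu)}\int d_\Omega^2\,d\pi=\Wd^2\bigl({\Omega^{1/2}}_\#\mu,{\Omega^{1/2}}_\#\nu\bigr)$, and substituting into \eqref{eqn:maxOmegamin} gives \eqref{eqn:maxOmegaWass}.

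The step I expect to be the main obstacle is the minimax exchange: applying Sion's theorem directly on the infinite-dimensional set $\Pi(\mu,\nu)$ would force one to verify weak lower semicontinuity of $\pi\mapsto\int d_\Omega^2\,d\pi$ for an unbounded cost, so the clean route is the reduction to the finite-dimensional convex body $\mathcal V\subset\Rdd$; there one should observe that the minimum over $\mathcal V$ is attained — already guaranteed by Lemmas~\ref{lemma:infsup_is_minmax} and~\ref{lemma:dual_formulation} — so one may equivalently run the argument on the compact convex set $\overline{\mathcal V}$. A minor secondary subtlety is the "$\le$" direction of Step~3 when $\Omega^{1/2}$ is singular: no inverse push-forward is available, which is precisely why the gluing lemma is used.
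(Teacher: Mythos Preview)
Your proof is correct and complete. It differs from the paper's in two places that are worth noting.

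For the minimax exchange \eqref{eqn:minmaxOmega}$=$\eqref{eqn:maxOmegamin}, you reduce to the finite-dimensional convex body $\mathcal V=\{V_\pi:\pi\in\Pi(\mu,\nu)\}\subset\Rdd$ and apply Sion there, arguing that the compact convex $\mathcal F$ and bilinear pairing make this clean. The paper instead applies Sion's theorem directly on $\Pi(\mu,\nu)$: it keeps $\mathcal R$ compact convex, $\Pi(\mu,\nu)$ convex, and verifies that $\pi\mapsto\int d_\Omega^2\,d\pi$ is weakly lower semicontinuous via the standard trick of writing $d_\Omega^2$ as the increasing limit of bounded continuous functions $\phi_j$, so that $\int d_\Omega^2\,d\pi=\sup_j\int\phi_j\,d\pi$ is a supremum of weakly continuous maps. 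So the obstacle you flagged (unbounded cost) is exactly what the paper handles head-on, and it is not hard; your finite-dimensional reduction is an elegant alternative that avoids touching weak topologies on measures altogether.

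For \eqref{eqn:maxOmegamin}$=$\eqref{eqn:maxOmegaWass}, you use the gluing lemma to lift a coupling of the pushforwards back to a coupling of $\mu,\nu$. The paper instead invokes its Lemma~\ref{lemma:projected_transport_plans}, which states directly that $\Pi(f_\#\mu,f_\#\nu)=\{(f\otimes f)_\#\pi:\pi\in\Pi(\mu,\nu)\}$ for any Borel $f$, and whose proof constructs the lift by an explicit disintegration formula. The two devices are interchangeable here; your gluing argument is arguably the more standard OT idiom and handles the singular $\Omega^{1/2}$ case just as well.

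The first step (the identity $\sum_{l=1}^k\lambda_l(V)=\max_{\Omega\in\mathcal F}\tr(\Omega V)$) is essentially the same in both: the paper cites Overton--Womersley and Fan, while you give the short direct argument via $c_i=u_i^T\Omega u_i$.
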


We can now prove that \emph{both} PRW and SRW variants are, indeed, distances over $\PdRd$.
\begin{proposition}
For $k \in \range{d}$, both $\ProjPWasserstein{k}$ and $\ProjWasserstein{k}$ are distances over $\PdRd$.
\end{proposition}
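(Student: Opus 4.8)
The plan is to check the metric axioms for both $\ProjPWasserstein{k}$ and $\ProjWasserstein{k}$, using that most are inherited from $\Wd$ and reducing the two genuinely new points — the triangle inequality for $\ProjWasserstein{k}$ and the separation property — to classical tools (the gluing lemma, Minkowski's inequality, Cram\'er--Wold).

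\emph{Easy axioms.} Finiteness and non-negativity: since $\|P_E v\|\le\|v\|$ for every $E\in\G_k$, testing a $\Wd$-optimal plan in the definition of $\ProjWasserstein{k}$ gives $\ProjWasserstein{k}(\mu,\nu)\le\Wd(\mu,\nu)<\infty$, and the Remark gives $\ProjPWasserstein{k}\le\ProjWasserstein{k}$. Symmetry: $\Wd$ is symmetric, and for $\ProjWasserstein{k}$ the coordinate swap $\pi\mapsto\pi^\top$ is a bijection $\Pi(\mu,\nu)\to\Pi(\nu,\mu)$ that leaves $\int\|P_E(x-y)\|^2 d\pi$ unchanged. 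Finally $\mu=\nu$ forces both quantities to vanish: trivially for $\ProjPWasserstein{k}$, and for $\ProjWasserstein{k}$ by testing the diagonal coupling $(\Id,\Id)_\#\mu$, along which $x-y\equiv0$.

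\emph{Triangle inequality.} For $\ProjPWasserstein{k}$ this is immediate: for each fixed $E\in\G_k$, $\Wd$ is a distance on $\mathscr{P}_2(E)$, so $\Wd({P_E}_\#\mu,{P_E}_\#\rho)\le\ProjPWasserstein{k}(\mu,\nu)+\ProjPWasserstein{k}(\nu,\rho)$, and one takes the supremum over $E$. For $\ProjWasserstein{k}$ I would use gluing: pick optimal plans $\pi_{12}\in\Pi(\mu,\nu)$ and $\pi_{23}\in\Pi(\nu,\rho)$ (they exist by Lemma~\ref{lemma:infsup_is_minmax}), glue them along their common $\nu$-marginal into $\pi_{123}\in\mathscr{P}(\Rd\times\Rd\times\Rd)$ whose $(x,y)$- and $(y,z)$-marginals are $\pi_{12}$ and $\pi_{23}$, and let $\pi_{13}\in\Pi(\mu,\rho)$ be its $(x,z)$-marginal. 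For any $E\in\G_k$, $v\mapsto\|P_E v\|$ is a seminorm, so $\|P_E(x-z)\|\le\|P_E(x-y)\|+\|P_E(y-z)\|$ pointwise; Minkowski's inequality in $L^2(\pi_{123};\Rd)$ then gives
\begin{multline*}
\left(\int\|P_E(x-z)\|^2 d\pi_{13}\right)^{1/2}\le\left(\int\|P_E(x-y)\|^2 d\pi_{12}\right)^{1/2}\\
+\left(\int\|P_E(y-z)\|^2 d\pi_{23}\right)^{1/2}.
\end{multline*}
Bounding each term on the right by its maximum over $\G_k$ and using optimality of $\pi_{12},\pi_{23}$, the right-hand side becomes $\ProjWasserstein{k}(\mu,\nu)+\ProjWasserstein{k}(\nu,\rho)$; taking the maximum over $E$ on the left, and noting that $\ProjWasserstein{k}(\mu,\rho)$ is no larger than the value of the inner $\max_E$ at the particular coupling $\pi_{13}$, yields $\ProjWasserstein{k}(\mu,\rho)\le\ProjWasserstein{k}(\mu,\nu)+\ProjWasserstein{k}(\nu,\rho)$.

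\emph{Separation.} It remains to show that $d(\mu,\nu)=0$ implies $\mu=\nu$, and by the Remark ($\ProjPWasserstein{k}\le\ProjWasserstein{k}$) it suffices to treat $\ProjPWasserstein{k}$. If $\ProjPWasserstein{k}(\mu,\nu)=0$ then ${P_E}_\#\mu={P_E}_\#\nu$ for every $E\in\G_k$, since $\Wd$ is a genuine distance on each $\mathscr{P}_2(E)$. Given a unit vector $u$, choose $E\in\G_k$ with $\R u\subset E$ (possible since $k\ge1$); as $P_{\R u}=P_{\R u}\circ P_E$, the law of $\dotp{u}{x}$ is the same under $\mu$ and under $\nu$. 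Letting $u$ range over the unit sphere, the Cram\'er--Wold theorem forces $\mu=\nu$. The only real obstacle here is the triangle inequality for $\ProjWasserstein{k}$: one must merge \emph{two separately optimal} transport plans into a single coupling of $\mu$ and $\rho$, which is exactly what the gluing construction provides, after which the estimate is just Minkowski's inequality applied to the $\Rd$-valued maps $(x,y,z)\mapsto P_E(x-y)$ and $(x,y,z)\mapsto P_E(y-z)$ on the glued space.
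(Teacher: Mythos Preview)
Your proof is correct, but the triangle inequality for $\ProjWasserstein{k}$ is handled differently from the paper. The paper relies on Theorem~\ref{theo:convex_relaxation}, specifically the representation $\ProjWasserstein{k}(\mu,\nu)=\max_{\Omega\in\mathcal{R}}\Wd({\Omega^{1/2}}_\#\mu,{\Omega^{1/2}}_\#\nu)$: once $\ProjWasserstein{k}$ is written as a supremum of Wasserstein distances, the triangle inequality follows in one line by picking the optimal $\Omega_\star$ for the pair $(\mu_0,\mu_2)$, applying the triangle inequality for $\Wd$, and bounding each resulting term by the supremum. Your route instead works directly from the $\min_\pi\max_E$ definition, gluing two $\ProjWasserstein{k}$-optimal plans and invoking Minkowski's inequality for the seminorm $v\mapsto\|P_E v\|$ in $L^2$ of the glued measure. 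Both arguments are sound; the paper's is shorter but presupposes the full convex relaxation (and hence Sion's minimax theorem and the Fan/Overton--Womersley eigenvalue characterization), whereas yours is self-contained and uses only the existence result of Lemma~\ref{lemma:infsup_is_minmax} together with the classical gluing lemma. For the remaining axioms and for $\ProjPWasserstein{k}$ your argument coincides with the paper's, and your reduction of separation to one-dimensional projections via Cram\'er--Wold is a slight specialization of the paper's Lemma~\ref{lemma:renyi-cramer-wold}.
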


\emph{Proof.} Symmetry is clear for both objects, and for $\mu \in \PdRd$, $\ProjWasserstein{k}(\mu, \mu) = \ProjPWasserstein{k}(\mu, \mu) = 0$. Let $\mu, \nu \in \PdRd$ such that $\ProjWasserstein{k}(\mu, \nu) = 0$. Then $\ProjPWasserstein{k}(\mu, \nu) = 0$ and for any $E \in \G_k$, $\Wd({P_E}_\#\mu , {P_E}_\#\nu) = 0$, \emph{i.e.} ${P_E}_\#\mu = {P_E}_\#\nu$. Lemma~\ref{lemma:renyi-cramer-wold} (in the supplementary material) then shows that $\mu = \nu$.
For the triangle inequalities, let $\mu_0, \mu_1, \mu_2 \in \PdRd$. Let $\Omega_\star \in \left\{0 \preceq \Omega \preceq I ,\, \tr(\Omega) = k \right\}$ be optimal between $\mu_0$ and $\mu_2$. Using the triangle inequalities for the Wasserstein distance,
\[
	\begin{split}
		&\ProjWasserstein{k}(\mu_0, \mu_2) = \Wd\left[{\Omega_\star^{1/2}}_\#\mu_0, {\Omega_\star^{1/2}}_\#\mu_2\right] \\
     		&\leq \Wd\left[{\Omega_\star^{1/2}}_\#\mu_0, {\Omega_\star^{1/2}}_\#\mu_1\right] + \Wd\left[{\Omega_\star^{1/2}}_\#\mu_1, {\Omega_\star^{1/2}}_\#\mu_2\right] \\
     		&\leq \sup_{\substack{0 \preceq \Omega \preceq I\\\tr(\Omega) = k}} \Wd\left[{\Omega^{1/2}}_\#\mu_0, {\Omega^{1/2}}_\#\mu_1\right]  \\
		&+ \sup_{\substack{0 \preceq \Omega \preceq I\\\tr(\Omega) = k}} \Wd\left[{\Omega^{1/2}}_\#\mu_1, {\Omega^{1/2}}_\#\mu_2\right] \\
     		&= \ProjWasserstein{k}(\mu_0, \mu_1) + \ProjWasserstein{k}(\mu_1, \mu_2).
	\end{split}
\]
The same argument, used this time with projections, yields the triangle inequalities for $\ProjPWasserstein{k}$. \quad  $\square$
\section{Geometry of Subspace Robust Distances}\label{sec:geometry}
We prove in this section that SRW distances share several fundamental geometric properties with the Wasserstein distance. The first one states that distances between Diracs match the ground metric:

\begin{lemma} \label{lemma:tight_bounds}
For $x,y \in \Rd$ and $k \in \range{d}$,
\[
	\ProjWasserstein{k}(\delta_x,\delta_y)=\|x-y\|.
\]
\end{lemma}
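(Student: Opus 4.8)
The plan is to exploit the fact that between two Dirac masses there is exactly one coupling, so the outer infimum in the definition of $\ProjWasserstein{k}$ collapses, and the whole quantity reduces to a simple extremal problem over subspaces (or, equivalently, over the spectrum of a rank-one matrix).

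First I would observe that $\Pi(\delta_x, \delta_y) = \{ \delta_{(x,y)} \}$: any coupling must have first marginal $\delta_x$ and second marginal $\delta_y$, which forces it to be the Dirac at $(x,y)$. Hence for $\pi = \delta_{(x,y)}$ we simply have $\int \|P_E(x'-y')\|^2\,d\pi(x',y') = \|P_E(x-y)\|^2$, and therefore
\[
	\ProjWasserstein{k}^2(\delta_x,\delta_y) = \sup_{E \in \G_k} \|P_E(x-y)\|^2.
\]
Next I would bound this supremum from both sides. Since $P_E$ is an orthogonal projector, $\|P_E z\| \le \|z\|$ for every $z$, giving the upper bound $\sup_{E}\|P_E(x-y)\|^2 \le \|x-y\|^2$. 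For the matching lower bound: if $x = y$ the statement is trivial ($0 = \|x-y\|$); if $x \ne y$, pick any $k$-dimensional subspace $E$ containing the line $\mathbb{R}(x-y)$ — such an $E$ exists because $k \ge 1$ — and then $P_E(x-y) = x-y$, so $\|P_E(x-y)\|^2 = \|x-y\|^2$. Combining the two bounds yields $\ProjWasserstein{k}^2(\delta_x,\delta_y) = \|x-y\|^2$, and taking square roots finishes the argument. (Alternatively, one can invoke Lemma~\ref{lemma:dual_formulation} directly: $V_\pi = (x-y)(x-y)^T$ is rank one with unique nonzero eigenvalue $\|x-y\|^2$, so $\sum_{l=1}^k \lambda_l(V_\pi) = \|x-y\|^2$ as soon as $k \ge 1$.)

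There is essentially no hard step here; the only points requiring a word of care are that the coupling set is a singleton (so that the infimum is vacuous) and that a $k$-dimensional subspace containing a prescribed vector exists, which holds precisely because $k \in \range{d}$ ensures $k \ge 1$. I would state these explicitly to keep the argument self-contained.
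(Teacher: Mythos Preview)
Your proof is correct. Both you and the paper reduce the problem to optimizing the length of a projected vector and then observe the two-sided bound (contraction of projections for the upper bound, a subspace containing $x-y$ for the lower bound). The only difference is the entry point: the paper starts from the characterization~\eqref{eqn:maxOmegaWass} of Theorem~\ref{theo:convex_relaxation}, writing $\ProjWasserstein{k}(\delta_x,\delta_y)=\max_\Omega \Wd(\Omega^{1/2}_\#\delta_x,\Omega^{1/2}_\#\delta_y)=\max_\Omega\|\Omega^{1/2}(x-y)\|$, whereas you work directly from Definition~2 and collapse the infimum by noting $\Pi(\delta_x,\delta_y)=\{\delta_{(x,y)}\}$. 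Your route is slightly more self-contained, since it does not rely on the minimax interchange of Theorem~\ref{theo:convex_relaxation}; the paper's route has the (minor) advantage of not needing to argue separately that the coupling set is a singleton. The alternative you mention via Lemma~\ref{lemma:dual_formulation} and the rank-one matrix $V_\pi=(x-y)(x-y)^T$ is also perfectly valid and is essentially what the paper does phrased in eigenvalue language.
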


\paragraph{Metric Equivalence.}  Subspace robust Wasserstein distances $\ProjWasserstein{k}$ are equivalent to the Wasserstein distance $\Wd$:
\begin{proposition} \label{prop:equivalent_metrics}
For $k \in \range{d}$, $\ProjWasserstein{k}$ is equivalent to $\Wd$. More precisely, for $\mu, \nu \in \PdRd$,
\[
    \sqrt{\frac{k}{d}} \Wd(\mu, \nu) \leq \ProjWasserstein{k}(\mu, \nu) \leq \Wd(\mu, \nu).
\]
Moreover, the constants are tight since
\[
\begin{array}{lcl}
    \ProjWasserstein{k}(\delta_x, \delta_y) &=& \Wd(\delta_x, \delta_y) \\
    \ProjWasserstein{k}(\delta_0, \sigma) &=& \sqrt{\frac{k}{d}} \Wd(\delta_0, \sigma)
\end{array}
\]
where $\delta_x, \delta_y, \delta_0$ are Dirac masses at points $x,y,0 \in \Rd$ and $\sigma$ is the uniform probability distribution over the centered unit sphere in $\Rd$.
\end{proposition}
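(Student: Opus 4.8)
The plan is to read off both inequalities from the eigenvalue formula $\ProjWasserstein{k}^2(\mu,\nu) = \min_{\pi \in \Pi(\mu,\nu)} \sum_{l=1}^k \lambda_l(V_\pi)$ furnished by Lemma~\ref{lemma:dual_formulation}, using nothing more than elementary bounds on the partial sums of the sorted, nonnegative eigenvalues of the PSD matrix $V_\pi$ together with the identity $\tr(V_\pi)=\int\|x-y\|^2\,d\pi(x,y)$ noted in \S\ref{sec:background}. For the upper bound I would take $\pi^\star$ optimal for $\Wd$ and observe that, since $V_{\pi^\star}\succeq 0$, $\sum_{l=1}^k\lambda_l(V_{\pi^\star})\le\sum_{l=1}^d\lambda_l(V_{\pi^\star})=\tr(V_{\pi^\star})=\Wd^2(\mu,\nu)$; replacing $\pi^\star$ by the minimizing $\pi$ on the left only decreases the left-hand side, giving $\ProjWasserstein{k}^2\le\Wd^2$.

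For the lower bound, the key elementary fact is that for any $d\times d$ PSD matrix $V$ with eigenvalues $\lambda_1\ge\cdots\ge\lambda_d\ge 0$ the average of the $k$ largest dominates the overall average, $\tfrac1k\sum_{l=1}^k\lambda_l\ge\tfrac1d\sum_{l=1}^d\lambda_l$ (a one-line rearrangement since $\sum_{l=1}^k\lambda_l\ge k\lambda_k$ and $\sum_{l=k+1}^d\lambda_l\le(d-k)\lambda_k$). Applying this to $V_\pi$ yields $\sum_{l=1}^k\lambda_l(V_\pi)\ge\tfrac kd\tr(V_\pi)=\tfrac kd\int\|x-y\|^2\,d\pi\ge\tfrac kd\Wd^2(\mu,\nu)$ for \emph{every} $\pi\in\Pi(\mu,\nu)$; taking the infimum over $\pi$ gives $\ProjWasserstein{k}^2\ge\tfrac kd\Wd^2$, which is the claimed inequality.

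For tightness of the constant $1$ I would simply invoke Lemma~\ref{lemma:tight_bounds} and the classical $\Wd(\delta_x,\delta_y)=\|x-y\|$, so that $\ProjWasserstein{k}(\delta_x,\delta_y)=\|x-y\|=\Wd(\delta_x,\delta_y)$. For tightness of $\sqrt{k/d}$, note that $\Pi(\delta_0,\sigma)$ is the singleton $\{\delta_0\otimes\sigma\}$, so $V_\pi=\int yy^T\,d\sigma(y)$; because $\sigma$ is invariant under every orthogonal transformation, this matrix commutes with all rotations and is therefore a scalar multiple of $I_d$, and taking the trace ($\int\|y\|^2\,d\sigma=1$) forces $V_\pi=\tfrac1d I_d$. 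Hence $\sum_{l=1}^k\lambda_l(V_\pi)=k/d$ while $\Wd^2(\delta_0,\sigma)=\int\|y\|^2\,d\sigma=1$, so $\ProjWasserstein{k}(\delta_0,\sigma)=\sqrt{k/d}=\sqrt{k/d}\,\Wd(\delta_0,\sigma)$.

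I do not anticipate a real obstacle: once Lemma~\ref{lemma:dual_formulation} is available the argument reduces to the eigenvalue-averaging inequality and a short symmetry computation. The only points deserving a line of care are checking that the minimum over $\pi$ passes through the two inequalities in the correct direction (it does, since each bound holds pointwise in $\pi$) and justifying $\int yy^T\,d\sigma=\tfrac1d I_d$ from the rotational invariance of $\sigma$.
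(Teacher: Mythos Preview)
Your proof is correct. Both you and the paper hinge on the same elementary fact---that the sum of the $k$ largest among $d$ nonnegative numbers is at least $\tfrac{k}{d}$ times their total---but you apply it in slightly different places. You work uniformly through the eigenvalue characterization of Lemma~\ref{lemma:dual_formulation}: for the upper bound you use $\sum_{l\le k}\lambda_l(V_{\pi^\star})\le\tr(V_{\pi^\star})$ at an optimal $\pi^\star$, and for the lower bound you apply the averaging inequality directly to the eigenvalues of $V_\pi$. The paper instead uses the $\Omega$-formulation of Theorem~\ref{theo:convex_relaxation}: the upper bound comes from $\|\Omega^{1/2}(x-y)\|\le\|x-y\|$, and for the lower bound it restricts the inner maximum to coordinate-aligned subspaces, so the averaging inequality is applied to the \emph{diagonal entries} of $V_\pi$ rather than its eigenvalues. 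Your route is a little more direct and avoids the detour through coordinate subspaces; the paper's route has the minor advantage of not needing Lemma~\ref{lemma:dual_formulation} for the upper bound. Your treatment of the tightness cases (invoking Lemma~\ref{lemma:tight_bounds} for Diracs, and computing $V_{\delta_0\otimes\sigma}=\tfrac{1}{d}I_d$ by rotational invariance) is also correct and somewhat more explicit than the paper's, which simply notes that the inequalities in its lower-bound chain become equalities for $(\delta_0,\sigma)$.
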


\paragraph{Dependence on the dimension.} We fix $\mu, \nu \in \PdRd$ and we ask the following question : how does $\ProjWasserstein{k}(\mu, \nu)$ depend on the dimension $k \in \range{d}$ ? The following lemma gives a result in terms of eigenvalues of $V_{\pi_k}$, where $\pi_k \in \Pi(\mu,\nu)$ is optimal for some dimension $k$, then we translate in Proposition~\ref{prop:dependence_dimension} this result in terms of $\ProjWasserstein{k}$.

\begin{lemma} \label{lemma:increasing_concave_in_K}
Let $\mu, \nu \in \PdRd$. For any $k \in \range{d-1}$,
\[
	\begin{split}
        		\lambda_{k+1}(V_{\pi_{k+1}})& \leq \ProjWasserstein{k+1}^2(\mu, \nu) - \ProjWasserstein{k}^2(\mu, \nu) \\
    		& \leq \lambda_{k+1}(V_{\pi_k})
	\end{split}
\]
where for $L \in \range{d}$, $\displaystyle \pi_L \in \argmin_{\pi \in \Pi(\mu, \nu)} \sum_{l=1}^L \lambda_l(V_\pi)$.
\end{lemma}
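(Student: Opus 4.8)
The plan is to reduce everything to the eigenvalue characterization from Lemma~\ref{lemma:dual_formulation}, which gives $\ProjWasserstein{L}^2(\mu,\nu) = \min_{\pi \in \Pi(\mu,\nu)} \sum_{l=1}^L \lambda_l(V_\pi) = \sum_{l=1}^L \lambda_l(V_{\pi_L})$ for each $L \in \range{d}$, where $\pi_L$ is a minimizer of $\pi \mapsto \sum_{l=1}^L \lambda_l(V_\pi)$ over the (weakly compact) set $\Pi(\mu,\nu)$. Once this is in hand, both inequalities follow from testing the minimization defining $\ProjWasserstein{k}^2$ and $\ProjWasserstein{k+1}^2$ against the ``wrong'' optimal plan.

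For the \textbf{upper bound}, I would start from $\ProjWasserstein{k+1}^2(\mu,\nu) = \sum_{l=1}^{k+1} \lambda_l(V_{\pi_{k+1}})$ and use that $\pi_{k+1}$ minimizes the sum of the $k+1$ largest eigenvalues, so in particular $\sum_{l=1}^{k+1} \lambda_l(V_{\pi_{k+1}}) \le \sum_{l=1}^{k+1} \lambda_l(V_{\pi_k})$. Subtracting $\ProjWasserstein{k}^2(\mu,\nu) = \sum_{l=1}^{k} \lambda_l(V_{\pi_k})$ from both sides leaves exactly $\lambda_{k+1}(V_{\pi_k})$ on the right, giving $\ProjWasserstein{k+1}^2(\mu,\nu) - \ProjWasserstein{k}^2(\mu,\nu) \le \lambda_{k+1}(V_{\pi_k})$.

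For the \textbf{lower bound}, I would symmetrically use that $\pi_k$ minimizes the sum of the $k$ largest eigenvalues, hence $\ProjWasserstein{k}^2(\mu,\nu) = \sum_{l=1}^{k} \lambda_l(V_{\pi_k}) \le \sum_{l=1}^{k} \lambda_l(V_{\pi_{k+1}})$. Combining with $\ProjWasserstein{k+1}^2(\mu,\nu) = \sum_{l=1}^{k+1} \lambda_l(V_{\pi_{k+1}})$ and subtracting gives $\ProjWasserstein{k+1}^2(\mu,\nu) - \ProjWasserstein{k}^2(\mu,\nu) \ge \lambda_{k+1}(V_{\pi_{k+1}})$, which is the desired inequality.

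There is essentially no hard step here; the only point requiring a word of care is the existence of the minimizers $\pi_k, \pi_{k+1}$, i.e.\ that the infimum in $\min_\pi \sum_{l=1}^L \lambda_l(V_\pi)$ is attained — this follows from weak compactness of $\Pi(\mu,\nu)$ together with lower semicontinuity of $\pi \mapsto \sum_{l=1}^L \lambda_l(V_\pi)$ (the map $\pi \mapsto V_\pi$ being suitably continuous and the sum of top eigenvalues being continuous in the matrix), and it is already implicit in Lemma~\ref{lemma:dual_formulation}. Everything else is just the additive telescoping $\sum_{l=1}^{k+1} - \sum_{l=1}^{k} = \lambda_{k+1}(\cdot)$ applied to the two candidate matrices $V_{\pi_k}$ and $V_{\pi_{k+1}}$.
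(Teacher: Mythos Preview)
Your proposal is correct and follows essentially the same approach as the paper: both inequalities are obtained by plugging the ``wrong'' optimizer ($\pi_k$ into the $(k{+}1)$-eigenvalue sum, and $\pi_{k+1}$ into the $k$-eigenvalue sum) and then telescoping $\sum_{l=1}^{k+1} - \sum_{l=1}^{k} = \lambda_{k+1}$. The paper's proof is line-for-line the same computation, and your remark about the existence of $\pi_k,\pi_{k+1}$ via Lemma~\ref{lemma:infsup_is_minmax}/\ref{lemma:dual_formulation} is exactly the justification the paper relies on implicitly.
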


\begin{proposition} \label{prop:dependence_dimension}
Let $\mu, \nu \in \PdRd$. The sequence $k \mapsto \ProjWasserstein{k}^2(\mu,\nu)$ is increasing and concave. In particular, for $k \in \range{d-1}$,      
\[
	\ProjWasserstein{k+1}^2(\mu,\nu) - \ProjWasserstein{k}^2(\mu,\nu) 
      	\geq \frac{\Wd^2(\mu,\nu) - \ProjWasserstein{k}^2(\mu,\nu)}{d-k}.
\]
Moreover, for any $k \in \range{d-1}$,
\[
    \ProjWasserstein{k}(\mu,\nu) \leq \ProjWasserstein{k+1}(\mu,\nu) \leq \sqrt{\frac{k+1}{k}} \ProjWasserstein{k}(\mu,\nu).
\]
\end{proposition}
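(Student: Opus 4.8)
The plan is to deduce every claim from Lemma~\ref{lemma:increasing_concave_in_K}, combined with two elementary facts about the ordered eigenvalues of the positive semidefinite matrix $V_\pi$: they are nonnegative, and $\lambda_{k+1}(V_\pi) \le \lambda_k(V_\pi) \le \frac{1}{k}\sum_{l=1}^{k}\lambda_l(V_\pi)$. I will also record the identity $\ProjWasserstein{d}^2(\mu,\nu) = \Wd^2(\mu,\nu)$, which follows immediately from Lemma~\ref{lemma:dual_formulation} since $\sum_{l=1}^{d}\lambda_l(V_\pi) = \tr(V_\pi) = \int \|x-y\|^2\,d\pi$. Throughout, $\pi_L$ denotes the minimizer of $\pi \mapsto \sum_{l=1}^{L}\lambda_l(V_\pi)$, so that $\ProjWasserstein{L}^2(\mu,\nu) = \sum_{l=1}^{L}\lambda_l(V_{\pi_L})$.

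\emph{Monotonicity.} Since $\lambda_{k+1}(V_{\pi_{k+1}}) \ge 0$, the lower bound in Lemma~\ref{lemma:increasing_concave_in_K} gives $\ProjWasserstein{k+1}^2(\mu,\nu) - \ProjWasserstein{k}^2(\mu,\nu) \ge 0$, so $k \mapsto \ProjWasserstein{k}^2(\mu,\nu)$ (hence also $k \mapsto \ProjWasserstein{k}(\mu,\nu)$) is nondecreasing; this is already the left-hand inequality of the final display. \emph{Concavity.} I must show the increments $\ProjWasserstein{k+1}^2 - \ProjWasserstein{k}^2$ are non-increasing in $k$ (for $2 \le k \le d-1$). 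Applying the upper bound of Lemma~\ref{lemma:increasing_concave_in_K} at index $k$ and its lower bound at index $k-1$, and using $\lambda_{k+1}(V_{\pi_k}) \le \lambda_k(V_{\pi_k})$, yields
\[
	\ProjWasserstein{k+1}^2(\mu,\nu) - \ProjWasserstein{k}^2(\mu,\nu) \le \lambda_{k+1}(V_{\pi_k}) \le \lambda_k(V_{\pi_k}) \le \ProjWasserstein{k}^2(\mu,\nu) - \ProjWasserstein{k-1}^2(\mu,\nu),
\]
which is exactly concavity.

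\emph{The first displayed inequality.} This is a telescoping consequence of concavity: since the increments are non-increasing, $\ProjWasserstein{j+1}^2 - \ProjWasserstein{j}^2 \le \ProjWasserstein{k+1}^2 - \ProjWasserstein{k}^2$ for all $j \ge k$, and summing over $j = k, \dots, d-1$ and using $\ProjWasserstein{d}^2 = \Wd^2$ gives $\Wd^2(\mu,\nu) - \ProjWasserstein{k}^2(\mu,\nu) \le (d-k)\big(\ProjWasserstein{k+1}^2(\mu,\nu) - \ProjWasserstein{k}^2(\mu,\nu)\big)$, i.e. the claimed bound. \emph{The right-hand inequality of the final display.} Here I use optimality of $\pi_{k+1}$ for the top-$(k+1)$ eigenvalue sum: $\ProjWasserstein{k+1}^2(\mu,\nu) = \sum_{l=1}^{k+1}\lambda_l(V_{\pi_{k+1}}) \le \sum_{l=1}^{k+1}\lambda_l(V_{\pi_k}) = \ProjWasserstein{k}^2(\mu,\nu) + \lambda_{k+1}(V_{\pi_k})$, and then $\lambda_{k+1}(V_{\pi_k}) \le \frac{1}{k}\sum_{l=1}^{k}\lambda_l(V_{\pi_k}) = \frac{1}{k}\ProjWasserstein{k}^2(\mu,\nu)$, so $\ProjWasserstein{k+1}^2(\mu,\nu) \le \frac{k+1}{k}\ProjWasserstein{k}^2(\mu,\nu)$; taking square roots concludes.

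I do not expect a genuine obstacle: everything reduces to routine manipulations with ordered eigenvalues and the extremal characterizations already established. The only points demanding care are the index bookkeeping in the concavity step (the two invocations of Lemma~\ref{lemma:increasing_concave_in_K} must be at compatible indices $k$ and $k-1$, valid for $2 \le k \le d-1$) and making sure the optimality-of-$\pi_L$ inequalities are oriented correctly, i.e. that replacing $\pi_{k+1}$ by $\pi_k$ can only increase the top-$(k+1)$ eigenvalue sum.
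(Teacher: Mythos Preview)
Your proof is correct and follows essentially the same approach as the paper: monotonicity and concavity come straight from Lemma~\ref{lemma:increasing_concave_in_K} together with the ordering $\lambda_{k+1}\le\lambda_k$, and the upper bound $\ProjWasserstein{k+1}^2\le\frac{k+1}{k}\ProjWasserstein{k}^2$ is derived identically. The only minor variation is that for the first displayed inequality you deduce it by telescoping the concave increments up to $k=d$ (using $\ProjWasserstein{d}^2=\Wd^2$), whereas the paper, while noting this route works, instead gives a direct eigenvalue computation bounding $\lambda_{k+1}(V_{\pi_{k+1}})$ from below by the average of the remaining $d-k-1$ eigenvalues; both arguments are equally short and yield the same bound.
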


\paragraph{Geodesics}
We have shown in Proposition~\ref{prop:equivalent_metrics} that for any $k \in \range{d}$, $\left(\PdRd, \ProjWasserstein{k}\right)$ is a metric space with the same topology as that of the Wasserstein space $\left(\PdRd, \Wd \right)$. We conclude this section by showing that $\left(\PdRd, \ProjWasserstein{k}\right)$ is in fact a geodesic length space, and exhibits explicit constant speed geodesics. This can be used to interpolate between measures in $\ProjWasserstein{k}$ sense.

\begin{proposition} \label{prop:geodesic_space}
Let $\mu, \nu \in \PdRd$ and $k \in \range{d}$. Take 
\[
	\pi^* \in \argmin_{\pi \in \Pi(\mu,\nu)} \sum_{l=1}^k \lambda_l(V_\pi)
\]
and let $f_t(x,y) = (1-t)x + ty$. Then the curve 
\[
	t \mapsto \mu_t := {f_t}_\#\pi^*
\]
is a constant speed geodesic in $\left( \PdRd, \ProjWasserstein{k} \right)$ connecting $\mu$ and $\nu$. Consequently, $\left( \PdRd, \ProjWasserstein{k} \right)$ is a geodesic space.
\end{proposition}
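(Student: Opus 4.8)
The plan is to follow the classical McCann displacement-interpolation argument, transported to the max-min formulation of $\ProjWasserstein{k}$. First I would record that the curve is well posed: since $f_0(x,y)=x$ and $f_1(x,y)=y$, the marginal constraints on $\pi^*\in\Pi(\mu,\nu)$ give $\mu_0={f_0}_\#\pi^*=\mu$ and $\mu_1={f_1}_\#\pi^*=\nu$; and by convexity of $\|\cdot\|^2$, $\int\|f_t(x,y)\|^2\,d\pi^*(x,y)\le(1-t)\int\|x\|^2\,d\mu(x)+t\int\|y\|^2\,d\nu(y)<\infty$, so $\mu_t\in\PdRd$ for all $t\in[0,1]$. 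Existence of the minimizer $\pi^*$ in $\argmin_{\pi\in\Pi(\mu,\nu)}\sum_{l=1}^k\lambda_l(V_\pi)$ follows from the compactness of $\Pi(\mu,\nu)$ and lower semicontinuity of $\pi\mapsto\sum_l\lambda_l(V_\pi)$, i.e. the same reasoning as Lemma~\ref{lemma:infsup_is_minmax}.

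The heart of the argument is a two-sided estimate on $\ProjWasserstein{k}(\mu_s,\mu_t)$; assume $0\le s\le t\le 1$ without loss of generality. For the upper bound, observe that $(f_s,f_t)_\#\pi^*$ is an admissible coupling in $\Pi(\mu_s,\mu_t)$, and since $f_s(x,y)-f_t(x,y)=(t-s)(x-y)$, for every $E\in\G_k$,
\[
	\int\|P_E\big(f_s(x,y)-f_t(x,y)\big)\|^2\,d\pi^*(x,y)=(t-s)^2\int\|P_E(x-y)\|^2\,d\pi^*(x,y).
\]
Taking the supremum over $E\in\G_k$ and invoking Lemma~\ref{lemma:dual_formulation} (together with the correspondence $P_E=U^TU$ between $E\in\G_k$ and $U$ with $UU^T=I_k$), optimality of $\pi^*$ gives $\sup_{E\in\G_k}\int\|P_E(x-y)\|^2\,d\pi^*=\sum_{l=1}^k\lambda_l(V_{\pi^*})=\ProjWasserstein{k}^2(\mu,\nu)$, hence $\ProjWasserstein{k}(\mu_s,\mu_t)\le(t-s)\,\ProjWasserstein{k}(\mu,\nu)$. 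For the matching lower bound I would use the triangle inequality for $\ProjWasserstein{k}$ (established above), which yields
\[
	\ProjWasserstein{k}(\mu,\nu)\le\ProjWasserstein{k}(\mu_0,\mu_s)+\ProjWasserstein{k}(\mu_s,\mu_t)+\ProjWasserstein{k}(\mu_t,\mu_1),
\]
and then plug the upper bound into the first and last terms (bounding them by $s\,\ProjWasserstein{k}(\mu,\nu)$ and $(1-t)\,\ProjWasserstein{k}(\mu,\nu)$ respectively) to get $\ProjWasserstein{k}(\mu_s,\mu_t)\ge(t-s)\,\ProjWasserstein{k}(\mu,\nu)$. Combining the two inequalities gives $\ProjWasserstein{k}(\mu_s,\mu_t)=|t-s|\,\ProjWasserstein{k}(\mu,\nu)$, i.e. $t\mapsto\mu_t$ is a constant-speed geodesic from $\mu$ to $\nu$; since every pair of measures is joined by such a geodesic, $\left(\PdRd,\ProjWasserstein{k}\right)$ is a geodesic space.

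The only point requiring care is the identity $\sup_{E\in\G_k}\int\|P_E(x-y)\|^2\,d\pi^*=\ProjWasserstein{k}^2(\mu,\nu)$: one must be sure that a minimizer of $\pi\mapsto\sum_l\lambda_l(V_\pi)$ is simultaneously a minimizer of the $\inf$-$\sup$ problem defining $\ProjWasserstein{k}$. This is exactly what Lemma~\ref{lemma:dual_formulation} provides, so there is no real difficulty; the remaining ingredients — admissibility of the push-forward coupling, the second-moment bound on $\mu_t$, and the telescoping use of the triangle inequality — are routine.
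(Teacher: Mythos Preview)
Your proposal is correct and follows essentially the same approach as the paper's proof: both establish the upper bound $\ProjWasserstein{k}(\mu_s,\mu_t)\le|t-s|\,\ProjWasserstein{k}(\mu,\nu)$ via the candidate coupling $(f_s,f_t)_\#\pi^*$ together with the scaling identity $f_s-f_t=(t-s)(x-y)$, and then force equality by the telescoping triangle inequality. Your version is slightly more careful in checking well-posedness ($\mu_0=\mu$, $\mu_1=\nu$, $\mu_t\in\PdRd$) and phrases the upper bound through $\sup_{E\in\G_k}$ rather than through $\sum_l\lambda_l(V_{\pi(s,t)})$, but by Lemma~\ref{lemma:dual_formulation} these are the same computation.
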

\emph{Proof.}
We first show that for any $s,t \in [0,1]$,
\[
	\ProjWasserstein{k}(\mu_s, \mu_t) = |t-s| \ProjWasserstein{k}(\mu, \nu)
\]
by computing the cost of the transport plan $\pi(s,t) = (f_s, f_t)_\#\pi^* \in \Pi(\mu_s, \mu_t)$ and using the triangular inequality. Then the curve $(\mu_t)$ has constant speed 
\[
	|\mu_t'| = \lim_{\epsilon \to 0} \frac{\ProjWasserstein{k}(\mu_{t+\epsilon}, \mu_t)}{|\epsilon|} = \ProjWasserstein{k}(\mu, \nu),
\]
and the length of the curve $(\mu_t)$ is 
\[
	\begin{split}
		&\sup\left\{ \sum_{i=0}^{n-1} \ProjWasserstein{k}(\mu_{t_i}, \mu_{t_{i+1}}) \,\bigg|\, 
		\begin{array}{c}
			n \geq 1\\
			0 = t_0 < ... < t_n = 1
		\end{array}
		\right\}\\
		&= \ProjWasserstein{k}(\mu, \nu),
	\end{split}
\]
\emph{i.e.} $(\mu_t)$ is a geodesic connecting $\mu$ and $\nu$.\quad$\square$

\section{Computation}\label{sec:algos} 

We provide in this section algorithms to compute the saddle point solution of $\ProjWasserstein{k}$. $\mu, \nu$ are now discrete with respectively $n$ and $m$ points and weights $a$ and $b$ : $\mu := \sum_{i=1}^n a_i \delta_{x_i}$ and $\nu := \sum_{j=1}^m b_i \delta_{y_j}$. For $k \in \range{d}$, three different objects are of interest: \emph{(i)} the value of $\ProjWasserstein{k}(\mu, \nu)$, \emph{(ii)} an optimal subspace $E^*$ obtained through the relaxation for SRW, \emph{(iii)} an optimal transport plan solving SRW.
A subspace can be used for dimensionality reduction, whereas an optimal transport plan can be used to compute a geodesic, \emph{i.e.} to interpolate between $\mu$ and $\nu$.

\subsection{Computational challenges to approximate $\ProjWasserstein{k}$}

We observe that solving $\min_{\pi \in \Pi(\mu,\nu)} \sum_{l=1}^k \lambda_l(V_\pi)$ is challenging. Considering a direct projection onto the transportation polytope
\[
	\Pi(\mu, \nu) = \left\{ \pi \in \R^{n \times m} \,|\, \pi\ones_m = a, \, \pi^T\ones_n = b\right\}
\]
would result in a costly quadratic network flow problem. The Frank-Wolfe algorithm, which does not require such projections, cannot be used directly because the application $\pi \mapsto \sum_{l=1}^k \lambda_l(V_\pi)$ is not smooth.

On the other hand, thanks to Theorem~\ref{theo:convex_relaxation}, solving the maximization problem is easier. Indeed, we can project onto the set of constraints $\mathcal{R} = \{ \Omega \in \Rdd \,|\, 0 \preceq \Omega \preceq I \,;\, \tr(\Omega) = k \}$ using Dykstra's projection algorithm~\cite{boyle1986method}. In this case, we will only get the value of $\ProjWasserstein{k}(\mu, \nu)$ and an optimal subspace, but not necessarily the actual optimal transport plan due to the lack of uniqueness for OT plans in general.

\paragraph{Smoothing} It is well known that saddle points are hard to compute for a bilinear objective~\cite{hammond1984solving}. Computations are greatly facilitated by adding smoothness, which allows the use of saddle point Frank-Wolfe algorithms~\cite{gidel2016frank}. Out of the two problems, the maximization problem is seemingly easier. Indeed, we can leverage the framework of regularized OT~\cite{CuturiSinkhorn} to output, using Sinkhorn's algorithm, a unique optimal transport plan $\pi^\star$ at each inner loop of the maximization. To save time, we remark that initial iterations can be solved with a low accuracy by limiting the number of iterations, and benefit from warm starts, using the scalings computed at the previous iteration, see~\citep[\S4]{COTFNT}. 

\subsection{Projected Supergradient Method for SRW}
In order to compute SRW and an optimal subspace, we can solve equation~\eqref{eqn:maxOmegamin} by maximizing the concave function
\[
	f : \Omega \mapsto \min_{\pi \in \Pi(\mu, \nu)} \sum_{i,j} d_\Omega^2(x_i, y_j) \,\pi_{i,j} = \min_{\pi \in \Pi(\mu, \nu)} \langle \Omega \,|\, V_\pi \rangle
\]
over the convex set $\mathcal{R}$. Since $f$ is not differentiable, but only superdifferentiable, we can only use a projected supergradient method. This algorithm is outlined in Algorithm~\ref{alg:projected_supergradient}. Note that by Danskin's theorem, for any $\Omega \in \mathcal{R}$,
\[
	\partial f(\Omega) = \textrm{Conv}\left\{V_{\pi^*} \,\bigg|\, \pi^* \in \argmin_{\pi \in \Pi(\mu, \nu)} \langle \Omega \,|\, V_\pi \rangle \right\}.
\]

\begin{algorithm}[tb]
	\caption{Projected supergradient method for SRW}
	\label{alg:projected_supergradient}
	\begin{algorithmic}
   		\STATE {\bfseries Input:} Measures $(x_i, a_i)$ and $(y_j, b_j)$, dimension $k$, learning rate $\tau_0$
		\STATE $\pi \leftarrow$ OT$((x,a),(y,b), cost=\|\cdot\|^2)$
		\STATE $U \leftarrow$ top $k$ eigenvectors of $V_\pi$
		\STATE Initialize $\Omega = UU^T \in \Rdd$
   		\FOR{$t=0$ {\bfseries to} max\_iter}
		\STATE $\pi \leftarrow$ OT$((x,a), (y,b), cost=d_\Omega^2)$
		\STATE $\tau = \tau_0/(t+1)$
		\STATE $\Omega \leftarrow$ Proj$_{\mathcal{R}}\left[ \Omega + \tau V_\pi \right]$
  		\ENDFOR
		\STATE{\bfseries Output:} $\Omega, \langle \Omega \,|\, V_\pi \rangle$
	\end{algorithmic}
\end{algorithm}

\subsection{Frank-Wolfe using Entropy Regularization}
Entropy-regularized optimal transport can be used to compute a unique optimal plan given a subspace. Let $\gamma > 0$ be the regularization strength. In this case, we want to maximize the concave function
\[
	f_\gamma : \Omega \mapsto \min_{\pi \in \Pi(\mu, \nu)} \langle \Omega \,|\, V_\pi \rangle + \gamma \sum_{i,j} \pi_{i,j} \left[ \log(\pi_{i,j}) - 1 \right]
\]
over the convex set $\mathcal{R}$. Since for all $\Omega \in \mathcal{R}$, there is a unique $\pi^*$ minimizing $\pi \mapsto \langle \Omega \,|\, V_\pi \rangle + \gamma \sum_{i,j} \pi_{i,j} \left[ \log(\pi_{i,j}) - 1 \right]$, $f_\gamma$ is differentiable. Instead of running a projected gradient ascent on $\Omega \in \mathcal{R}$, we propose to use the Frank-Wolfe algorithm when the regularization strength is positive. Indeed, there is no need to tune a learning rate in Frank-Wolfe, making it easier to use. We only need to compute, for fixed $\pi \in \Pi(\mu, \nu)$, the maximum over $\mathcal{R}$ of $\Omega \mapsto \langle \Omega \,|\, V_\pi \rangle$: 
\begin{lemma} \label{lemma:optimalOmega_fixedpi}
For $\pi \in \Pi(\mu, \nu)$, compute the eigendecomposition of $V_\pi = U \diag\left( \lambda_1, ..., \lambda_d \right) U^T$
with $\lambda_1 \geq ... \geq \lambda_d$. Then for $k \in \range{d}$, $\widehat\Omega = U \diag\left([\ones_k, \zeros_{d-k}]\right) U^T$ solves
\[
	\max_{\substack{0 \preceq \Omega \preceq I\\\tr(\Omega) = k}}  \int d_\Omega^2 \,d\pi.
\]
\end{lemma}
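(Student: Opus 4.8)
The plan is to reduce this matrix optimization to a one-dimensional rearrangement problem by exploiting that the objective is \emph{linear} in $\Omega$ and that $V_\pi$ is symmetric.

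First I would rewrite the objective. By the definition~\eqref{eq:vpi} of $V_\pi$, cyclicity of the trace and linearity of the integral,
\[
	\int d_\Omega^2 \,d\pi = \int (x-y)^T \Omega (x-y)\, d\pi(x,y) = \tr\!\left( \Omega V_\pi \right) = \langle \Omega \,|\, V_\pi \rangle ,
\]
so the problem becomes the maximization of the linear functional $\Omega \mapsto \langle \Omega \,|\, V_\pi \rangle$ over the compact convex set $\mathcal{R} = \{ \Omega \in \Rdd : 0 \preceq \Omega \preceq I,\ \tr(\Omega) = k \}$.

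Next I would diagonalize. Writing $\Omega = U \widetilde\Omega U^T$, orthogonality of $U$ gives $\Omega \in \mathcal{R} \iff \widetilde\Omega \in \mathcal{R}$ and $\langle \Omega \,|\, V_\pi \rangle = \langle \widetilde\Omega \,|\, \diag(\lambda_1,\dots,\lambda_d)\rangle = \sum_{i=1}^d \lambda_i \,\widetilde\Omega_{ii}$, so only the diagonal of $\widetilde\Omega$ enters the objective. From $0 \preceq \widetilde\Omega \preceq I$ one gets $\widetilde\Omega_{ii} = e_i^T \widetilde\Omega e_i \in [0,1]$ for each $i$, together with $\sum_i \widetilde\Omega_{ii} = \tr(\widetilde\Omega) = k$; hence the objective is bounded above by $\max\{ \sum_{i=1}^d \lambda_i \omega_i : \omega \in [0,1]^d,\ \sum_i \omega_i = k \}$. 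Since $\lambda_1 \geq \dots \geq \lambda_d$, a standard exchange argument shows this scalar maximum equals $\sum_{l=1}^k \lambda_l$, attained at $\omega = (\ones_k, \zeros_{d-k})$. Finally I would check that $\widehat\Omega = U \diag([\ones_k, \zeros_{d-k}]) U^T$ is feasible (eigenvalues in $\{0,1\}$, trace $k$) and attains $\langle \widehat\Omega \,|\, V_\pi \rangle = \sum_{l=1}^k \lambda_l$, which matches the upper bound, so $\widehat\Omega$ is optimal.

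None of the steps is a genuine obstacle; the one point needing a short argument is that the semidefinite constraint $0 \preceq \widetilde\Omega \preceq I$ pins the diagonal into $[0,1]^d$, collapsing the problem to a linear program over the capped simplex whose optimum is the greedy vector — this is precisely Ky Fan's maximum principle for the sum of the $k$ largest eigenvalues, which one could alternatively invoke directly. It is worth recording, since it is used implicitly in Theorem~\ref{theo:convex_relaxation}, that the argument also yields $\max_{\Omega \in \mathcal{R}} \langle \Omega \,|\, V_\pi \rangle = \sum_{l=1}^k \lambda_l(V_\pi)$. $\square$
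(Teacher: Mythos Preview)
Your proof is correct and self-contained, but it follows a genuinely different route from the paper's. The paper treats $\max_{\Omega \in \mathcal{R}} \tr(\Omega V_\pi)$ as a semidefinite program, writes down its dual $\min_{s \in \R,\, Z \succeq 0,\, Z+sI \succeq V_\pi} \tr(Z)+ks$, and exhibits an explicit primal--dual feasible pair $(\widehat\Omega;\widehat s,\widehat Z)$ with $\widehat s=\lambda_k$ and $\widehat Z = U\diag((\lambda_l-\lambda_k)_+)U^T$ whose values coincide at $\sum_{l=1}^k \lambda_l$; optimality then follows from weak duality. You instead diagonalize first, observe that the objective depends only on the diagonal of $\widetilde\Omega$, and bound it by a linear program on the capped simplex whose greedy optimum is $\sum_{l=1}^k \lambda_l$ --- essentially proving Ky Fan's principle directly. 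Your argument is more elementary and avoids any appeal to SDP duality; the paper's argument has the small bonus of producing an explicit dual certificate, which is natural given the algorithmic context (duality gaps are used as a stopping criterion in Algorithm~\ref{alg:FW}).
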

This algorithm is outlined in algorithm~\ref{alg:FW}.

\begin{algorithm}[tb]
	\caption{Frank-Wolfe algorithm for regularized SRW}
	\label{alg:FW}
	\begin{algorithmic}
   		\STATE {\bfseries Input:} Measures $(x_i, a_i)$ and $(y_j, b_j)$, dimension $k$, regularization strength $\gamma > 0$, precision $\epsilon > 0$
		\STATE $\pi \leftarrow$ reg\_OT$((x,a),(y,b), reg=\gamma, cost=\|\cdot\|^2)$
		\STATE $U \leftarrow$ top $k$ eigenvectors of $V_\pi$
		\STATE Initialize $\Omega = UU^T \in \Rdd$
   		\FOR{$t=0$ {\bfseries to} max\_iter}
		\STATE $\pi \leftarrow$ reg\_OT$((x,a), (y,b), reg=\gamma, cost=d_\Omega^2)$
		\STATE $U \leftarrow $ top $k$ eigenvectors of $V_\pi$
		\IF{$\sum_{l=1}^k \lambda_l(V_\pi) - \langle \Omega \,|\, V_\pi \rangle \leq \epsilon \langle \Omega \,|\, V_\pi \rangle$}
		\STATE{break}
		\ENDIF
		\STATE $\widehat\Omega \leftarrow U \diag\left([\ones_k, \zeros_{d-k}]\right) U^T$
		\STATE $\tau = 2/(2+t)$
		\STATE $\Omega \leftarrow (1-\tau)\Omega + \tau \widehat\Omega$
  		\ENDFOR
		\STATE{\bfseries Output:} $\Omega, \pi, \langle \Omega \,|\, V_\pi \rangle$
	\end{algorithmic}
\end{algorithm}

\subsection{Initialization and Stopping Criterion}

We propose to initialize Algorithms~\ref{alg:projected_supergradient}~and~\ref{alg:FW} with $\Omega_0 = UU^T$ where $U \in \R^{d \times k}$ is the matrix of the top $k$ eigenvectors (\emph{i.e.} the eigenvectors associated with the top $k$ eigenvalues) of $V_{\pi^*}$ and $\pi^*$ is an optimal transport plan between $\mu$ and $\nu$. In other words, $\Omega_0$ is the projection matrix onto the $k$ first principal components of the transport-weighted displacement vectors. Note that $\Omega_0$ would be optimal is $\pi^*$ were optimal for the min-max problem, and that this initialization only costs the equivalent of one iteration.\newline

When entropic regularization is used, Sinkhorn algorithm is run at each iteration of Algorithms~\ref{alg:projected_supergradient}~and~\ref{alg:FW}. We propose to initialize the potentials in Sinkhorn algorithm with the latest computed potentials, so that the number of iterations in Sinkhorn algorithm should be small after a few iterations of Algorithms~\ref{alg:projected_supergradient}~or~\ref{alg:FW}. \newline

We sometimes need to compute $\ProjWasserstein{k}(\mu,\nu)$ for all $k \in \range{d}$, for example to choose the optimal $k$ with an ``elbow'' rule. To speed the computations up, we propose to compute this sequence iteratively from $k=d$ to $k=1$. At each iteration, \emph{i.e.} for each dimension $k$, we initialize the algorithm with $\Omega_0 = UU^T$, where $U \in \R^{d \times k}$ is the matrix of the top $k$ eigenvectors of $V_{\pi_{k+1}}$ and $\pi_{k+1}$ is the optimal transport plan for dimension $k+1$. We also initialize the Sinkhorn algorithm with the latest computed potentials.

Instead of running a fixed number of iterations in Algorithm~\ref{alg:FW}, we propose to stop the algorithm when the computation error is smaller than a fixed threshold $\epsilon$. The computation error at iteration $t$ is:
\[
	\frac{| \ProjWasserstein{k}(\mu, \nu) - \widehat{\ProjWasserstein{k}}(t)|}{\ProjWasserstein{k}(\mu, \nu)} \leq \frac{\Delta(t)}{\widehat{\ProjWasserstein{k}}(t)}
\]
where $\widehat{\ProjWasserstein{k}}(t)$ is the computed ``max-min'' value and $\Delta(t)$ is the duality gap at iteration $t$. We stop as soon as $\Delta(t) / \widehat{\ProjWasserstein{k}}(t) \leq \epsilon$.

\section{Experiments}\label{sec:exp}

We first compare SRW with the experimental setup used to evaluate FactoredOT~\citep{weed19}. We then study the ability of SRW distances to capture the dimension of sampled measures by looking at their value for increasing dimensions $k$, as well as their robustness to noise.

\subsection{Fragmented Hypercube}

We first consider $\mu = \mathcal{U}([-1,1])^d$ to be uniform over an hypercube, and $\nu = T_\#\mu$ the pushforward of $\mu$ under the map $T(x) = x + 2\,\textrm{sign}(x) \odot ( \sum_{k=1}^{k^*} e_k)$, where sign is taken elementwise, $k^* \in \range{d}$ and $(e_1, ..., e_d)$ is the canonical basis of $\Rd$. The map $T$ splits the hypercube into four different hyperrectangles. $T$ is a subgradient of a convex function, so by~\citeauthor{Brenier91}'s theorem \citeyearpar{Brenier91} it is an optimal transport map between $\mu$ and $\nu = T_\#\mu$ and
\[
	\Wd^2(\mu, \nu) = \int \|x - T(x)\|^2 \,d\mu(x) = 4k^*.
\]
Note that for any $x$, the displacement vector $T(x) - x$ lies in the $k^*$-dimensional subspace $\textrm{span}\{e_1, ..., e_{k^*}\} \in \G_{k^*}$, which is optimal. This means that for $k \geq k^*$, $\ProjWasserstein{k}^2(\mu, \nu)$ is constant equal to $4k^*$. We show the interest of plotting, based on two empirical distributions $\hat\mu$ from $\mu$ and $\hat\nu$ from $\nu$, the sequence $k \mapsto \ProjWasserstein{k}^2(\hat\mu, \hat\nu)$, for different values of $k^*$. That sequence is increasing concave by proposition~\ref{prop:dependence_dimension}, and increases more slowly after $k=k^*$, as can be seen on Figure~\ref{fig:hypercube_curve_k}. This is the case because the last $d-k^*$ dimensions only represent noise, but is recovered in our plot.
\begin{figure}[h!]
	\centering
	\includegraphics[width=0.48\textwidth]{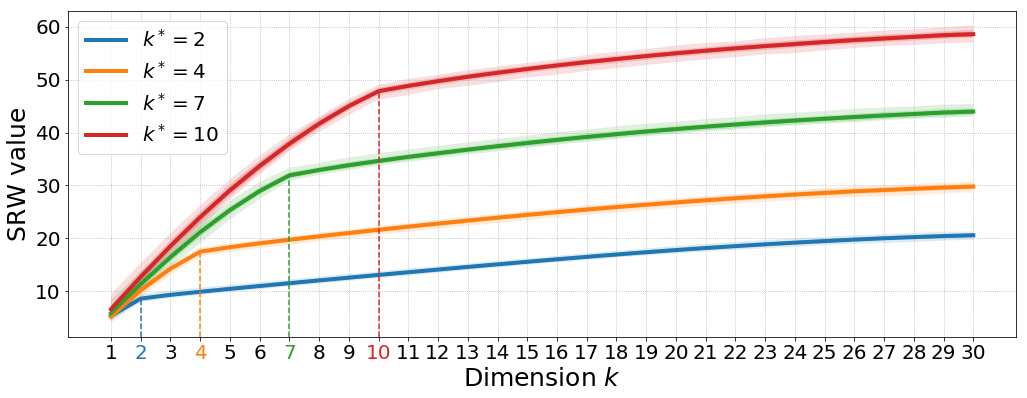}
	\vskip-.5cm
	\caption{$\ProjWasserstein{k}^2(\hat\mu, \hat\nu)$ depending on the dimension $k \in \range{d}$, for $k^* \in \{2, 4, 7, 10\}$, where $\hat\mu, \hat\nu$ are empirical measures from $\mu$ and $\nu$ respectively with $100$ points each. Each curve is the mean over $100$ samples, and shaded area show the min and max values.}
	\label{fig:hypercube_curve_k}
\end{figure}

We consider next $k^*=2$, and choose from the result of Figure~\ref{fig:hypercube_curve_k}, $k=2$. We look at the estimation error $|\Wd^2(\mu, \nu) - \ProjWasserstein{k}^2(\hat\mu, \hat\nu)|$ where $\hat\mu, \hat\nu$ are empirical measures from $\mu$ and $\nu$ respectively with $n$ points each. In Figure~\ref{fig:hypercube_estimation_error}, we plot this estimation error depending on the number of points $n$. In Figure~\ref{fig:hypercube_subspace_estimation_error}, we plot the subspace estimation error $\|\Omega^* - \widehat\Omega\|$ depending on $n$, where $\Omega^*$ is the optimal projection matrix onto $\textrm{span}\{e_1, e_2\}$.

\begin{figure}[!h]
	\centering
	\includegraphics[width=0.48\textwidth]{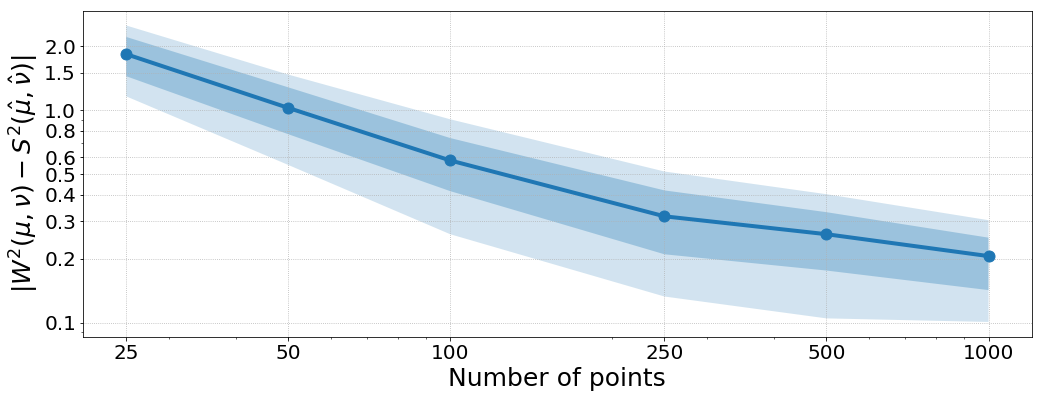}
	\vskip-.5cm
	\caption{Mean estimation error over $500$ random samples for $n$ points, $n \in \{25, 50, 100, 250, 500, 1000\}$. The shaded areas represent the 10\%-90\% and 25\%-75\% quantiles over the $500$ samples.}
	\label{fig:hypercube_estimation_error}
\end{figure}

\begin{figure}[!h]
	\centering
	\includegraphics[width=0.48\textwidth]{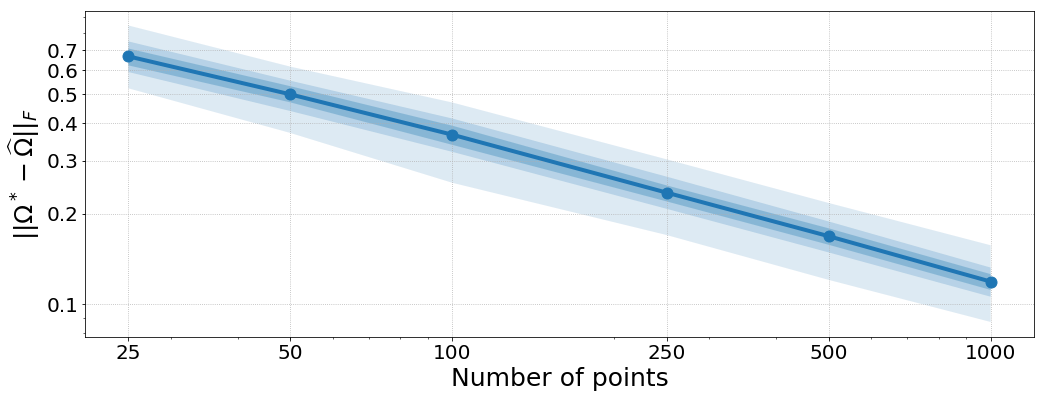}
	\vskip-.5cm
	\caption{Mean estimation of the subspace estimation error over $500$ samples, depending on $n \in \{25, 50, 100, 250, 500, 1000\}$. The shaded areas represent the 10\%-90\% and 25\%-75\% quantiles over the $500$ samples.}
	\label{fig:hypercube_subspace_estimation_error}
\end{figure}

We also plot the optimal transport plan (in the sense of $\Wd$, Figure~\ref{fig:hypercube_map} left) and the optimal transport plan (in the sense of $\ProjWasserstein{2}$) between $\hat\mu$ and $\hat\nu$ (with $n=250$ points each, Figure~\ref{fig:hypercube_map} right).
\begin{figure}[h!]
	\centering
	\begin{subfigure}
       		 \centering
		\includegraphics[width=0.22\textwidth]{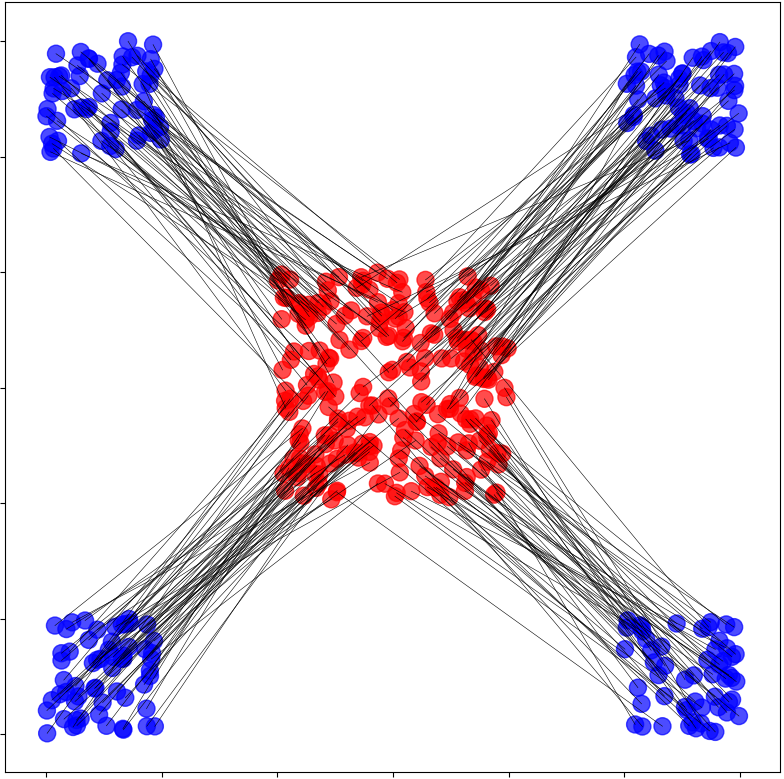}
    	\end{subfigure}%
    	~ 
   	 \begin{subfigure}
      		\centering
        		\includegraphics[width=0.22\textwidth]{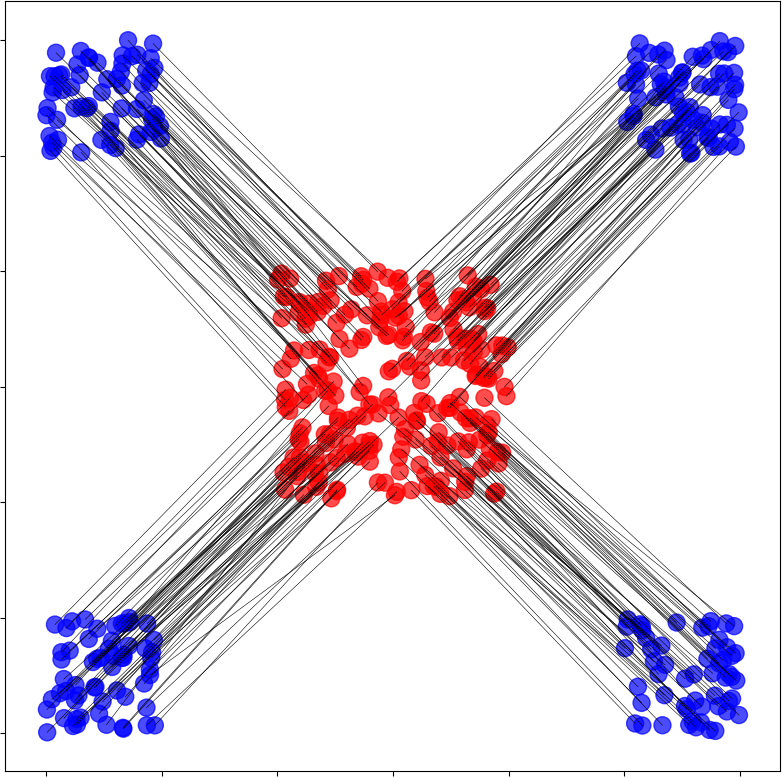}
   	 \end{subfigure}
\vskip-.5cm    \caption{Fragmented hypercube, $n=250$, $d=30$. Optimal mapping in the Wasserstein space (left) and in the SRW space (right). Geodesics in the SRW space are robust to statistical noise.}
		\label{fig:hypercube_map}
\end{figure}

\subsection{Robustness, with $20$-D Gaussians} \label{expe:gaussians}

We consider $\mu = \mathcal{N}(0, \Sigma_1)$ and $\nu = \mathcal{N}(0, \Sigma_2)$, with $\Sigma_1, \Sigma_2 \in \Rdd$ semidefinite positive of rank $k$. It means that the supports of $\mu$ and $\nu$ are $k$-dimensional subspaces of $\Rd$. Although those two subspaces are $k$-dimensional, they may be different. Since the union of two $k$-dimensional subspaces is included in a $2k$-dimensional subspace, for any $l \geq 2k$, $\ProjWasserstein{l}^2(\mu, \nu) = \Wd^2(\mu,\nu)$. \newline

For our experiment, we simulated $100$ independent couples of covariance matrices $\Sigma_1, \Sigma_2$ in dimension $d=20$, each having independently a Wishart distribution with $k=5$ degrees of freedom. For each couple of matrices, we draw $n=100$ points from $\mathcal{N}(0, \Sigma_1)$ and $\mathcal{N}(0, \Sigma_2)$ and considered $\hat\mu$ and $\hat\nu$ the empirical measures on those points. In Figure~\ref{fig:gaussians_curve_k}, we plot the mean (over the $100$ samples) of $l \mapsto \ProjWasserstein{l}^2(\hat\mu, \hat\nu) / \Wd^2(\hat\mu, \hat\nu)$. We plot the same curve for noisy data, where each point was added a $\mathcal{N}(0,I)$ random vector. With moderate noise, the data is only approximately on the two $k=5$-dimensional subspaces, but the SRW does not vary too much.
\begin{figure}[h!]
	\centering
	\includegraphics[width=0.48\textwidth]{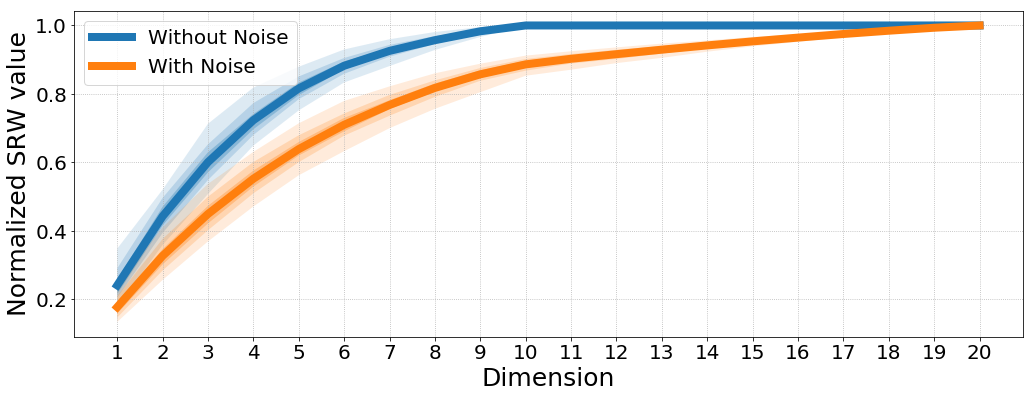}
	\vskip-.5cm
	\caption{Mean normalized SRW distance, depending on the dimension. The shaded area show the 10\%-90\% and 25\%-75\% quantiles and the minimum and maximum values over the $100$ samples.}
	\label{fig:gaussians_curve_k}
\end{figure}

\subsection{$\ProjWasserstein{k}$ is Robust to Noise}
As in experiment~\ref{expe:gaussians}, we consider $100$ independent samples of couples $\Sigma_1, \Sigma_2 \in \Rdd$, each following independently a Wishart distribution with $k=5$ degrees of freedom. For each couple, we draw $n=100$ points from $\mathcal{N}(0,\Sigma_1)$ and $\mathcal{N}(0,\Sigma_2)$ and consider the empirical measures $\hat\mu$ and $\hat\nu$ on those points. We then gradually add Gaussian noise $ \sigma \mathcal{N}(0, I)$ to the points, giving measures $\hat\mu_\sigma$, $\hat\nu_\sigma$. In Figure~\ref{fig:gaussians_curve_noise}, we plot the mean (over the $100$ samples) of the relative errors 
\[
	\sigma \mapsto \frac{| \ProjWasserstein{5}^2(\hat\mu_\sigma, \hat\nu_\sigma) - \ProjWasserstein{5}^2(\hat\mu_0, \hat\nu_0) |}{ \ProjWasserstein{5}^2(\hat\mu_0, \hat\nu_0) }
\]
and
\[
	\sigma \mapsto \frac{| \Wd^2(\hat\mu_\sigma, \hat\nu_\sigma) - \Wd^2(\hat\mu_0, \hat\nu_0) |}{ \Wd^2(\hat\mu_0, \hat\nu_0) }.
\]
Note that for small noise level, the imprecision in the computation of the SRW distance adds up to the error caused by the added noise. SRW distances seem more robust to noise than the Wasserstein distance when the noise has moderate to high variance.\newline

\begin{figure}[h!]
	\centering
	\includegraphics[width=0.48\textwidth]{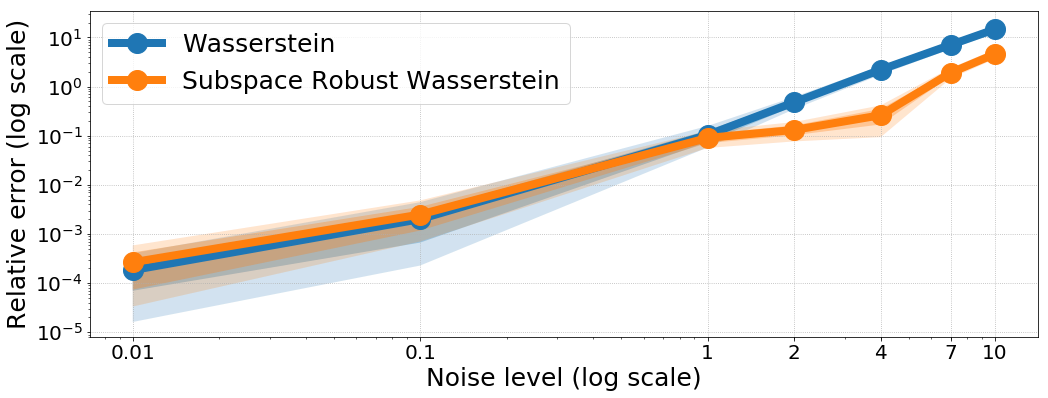}
	\vskip-.5cm
	\caption{Mean SRW distance over $100$ samples, depending on the noise level. Shaded areas show the min-max values and the 10\%-90\% quantiles.}
	\label{fig:gaussians_curve_noise}
\end{figure}

\subsection{Computation time}

We consider the Fragmented Hypercube experiment, with increasing dimension $d$ and fixed $k^*=2$. Using $k=2$ and Algorithm~\ref{alg:FW} with $\gamma = 0.1$ and stopping threshold $\epsilon=0.05$, we plot in Figure~\ref{fig:times} the mean computation time of both SRW and Wasserstein distances on GPU, over $100$ random samplings with $n=100$. It shows that SRW computation is quadratic in dimension $d$, because of the eigendecomposition of matrix $V_\pi$ in Algorithm~\ref{alg:FW}.
\begin{figure}[h!]
	\centering
	\includegraphics[width=0.48\textwidth]{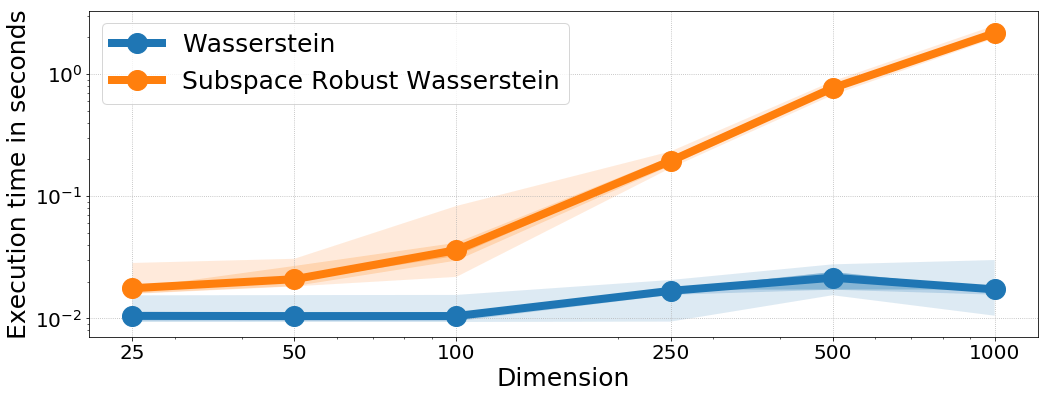}
	\vskip-.5cm
	\caption{Mean computation times on GPU (log-log scale). The shaded areas show the minimum and maximum values over the $100$ experiments.}
	\label{fig:times}
\end{figure}

\subsection{Real Data Experiment}\label{exp:real}

We consider the scripts of seven movies. Each script is transformed into a list of words, and using word2vec \cite{mikolov2018advances}, into a measure over $\R^{300}$ where the weights correspond to the frequency of the words. We then compute the SRW distance between all pairs of films: see Figure~\ref{fig:table_distances_cinema} for the SRW values. Movies with a same genre or thematic tend to be closer to each other: this can be visualized by running a two-dimensional metric multidimensional scaling (mMDS) on the SRW distances, as shown in Figure~\ref{fig:cinema_mmds} (left).
\begin{figure}[h!]
\setlength\tabcolsep{0.13cm}
\begin{tabular}{|c|c|c|c|c|c|c|c|}
  \hline
   		& \textit{D} & \textit{G} & \textit{I} & \textit{KB1} & \textit{KB2} & \textit{TM} & \textit{T} \\ \hline
  \textit{D} 		& 0 	     & 0.186 & 0.186 & 0.195 & 0.203 & 0.186 & \textbf{0.171} \\ \hline
  \textit{G} 		& 0.186 & 0 & \textbf{0.173} & 0.197 & 0.204 & 0.176 & 0.185 \\ \hline
  \textit{I} 		& 0.186 & 0.173 & 0 & 0.196 & 0.203 & \textbf{0.171} & 0.181 \\ \hline
  \textit{KB1} 	& 0.195 & 0.197 & 0.196 & 0 & \textbf{0.165} & 0.190 & 0.180 \\ \hline
  \textit{KB2} 	& 0.203 & 0.204 & 0.203 & \textbf{0.165} & 0 & 0.194 & 0.180 \\ \hline
  \textit{TM} 		& 0.186 & 0.176 & \textbf{0.171} & 0.190 & 0.194 & 0 & 0.183 \\ \hline
  \textit{T}		& \textbf{0.171} & 0.185 & 0.181 & 0.180 & 0.180 & 0.183 & 0 \\ \hline
\end{tabular}
\vskip-.3cm
\caption{\label{fig:table_distances_cinema}$\ProjWasserstein{k}^2$ distances between different movie scripts. Bold values correspond to the minimum of each line. \textit{D}=Dunkirk, \textit{G}=Gravity, \textit{I}=Interstellar, \textit{KB1}=Kill Bill Vol.1, \textit{KB2}=Kill Bill Vol.2, \textit{TM}=The Martian, \textit{T}=Titanic.}
\end{figure}

In Figure~\ref{fig:cinema_words} (right), we display the projection of the two measures associated with films \textit{Kill Bill Vol.1} and \textit{Interstellar} onto their optimal subspace. We compute the first (weighted) principal component of each projected measure, and find among the whole dictionary their 5 nearest neighbors in terms of cosine similarity. For \textit{Kill Bill Vol.1} , these are: 'swords', 'hull', 'sword', 'ice', 'blade'. For \textit{Interstellar}, they are: 'spacecraft', 'planets', 'satellites', 'asteroids', 'planet'. The optimal subspace recovers the semantic dissimilarities between the two films.

\begin{figure}[h!]
	\centering
	\begin{subfigure}
       		 \centering
		 \includegraphics[width=0.22\textwidth]{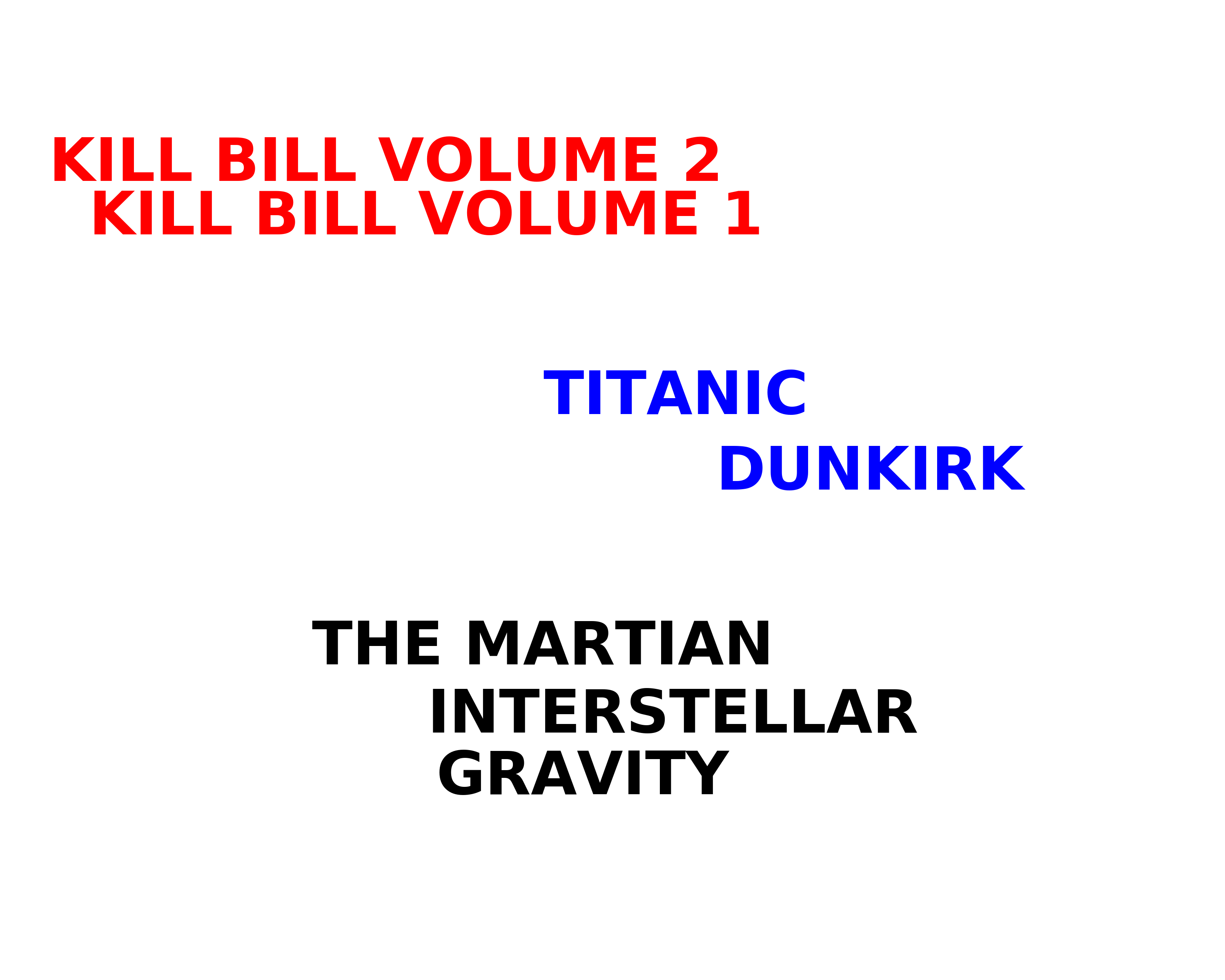}
    	\end{subfigure}%
    	~ 
   	 \begin{subfigure}
      		\centering
        		\includegraphics[width=0.22\textwidth]{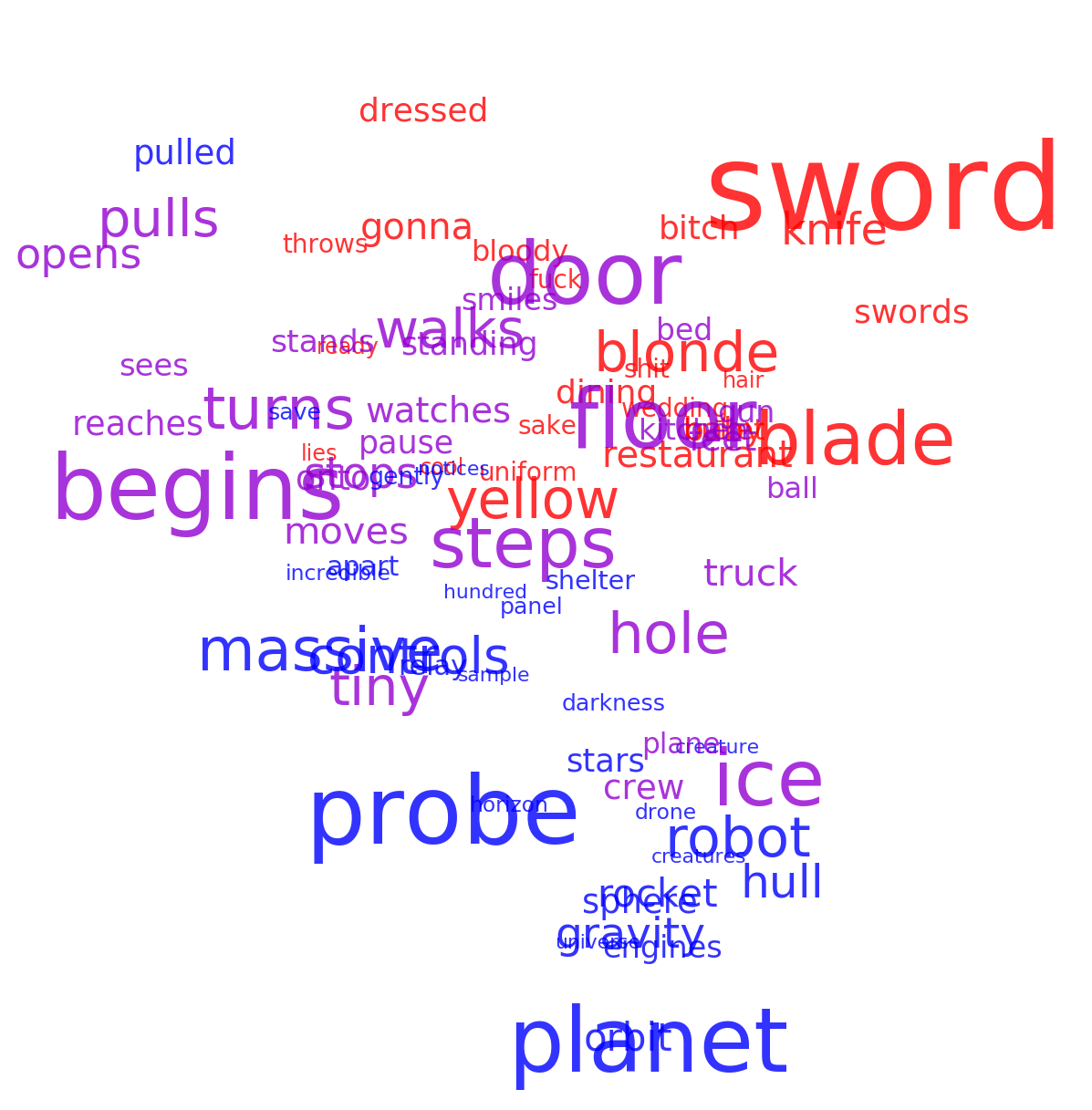}
   	 \end{subfigure}
	\vskip-.5cm
	\caption{\emph{Left:} Metric MDS projection for the distances of Figure~\ref{fig:table_distances_cinema}. \emph{Right:} Optimal 2-dimensional projection between \textit{Kill Bill Vol.1} (red) and \textit{Interstellar} (blue). Words appearing in both scripts are displayed in violet. For clarity, only the 30 most frequent words of each script are displayed.} \label{fig:cinema_mmds} \label{fig:cinema_words}
\end{figure}
\section{Conclusion}
We have proposed in this paper a new family of OT distances with robust properties. These distances take a particular interest when used with a squared-Euclidean cost, in which case they have several properties, both theoretical and computational. These distances share important properties with the $2$-Wasserstein distance, yet seem far more robust to random perturbation of the data and able to capture better signal. We have provided algorithmic tools to compute these SRW distance. They come at a relatively modest overhead, given that they require using regularized OT as the inner loop of a FW type algorithm. Future work includes the investigation of even faster techniques to carry out these computations, eventually automatic differentiation schemes as those currently benefitting the simple use of Sinkhorn divergences.

\paragraph{Acknowledegments.} Both authors acknowledge the support of a "Chaire d'excellence de l'IDEX Paris Saclay". We thank P. Rigollet and J. Weed for their remarks and their valuable input.

\newpage

\clearpage
\appendix

\begin{figure*}[!ht]
	\includegraphics[width=\textwidth]{./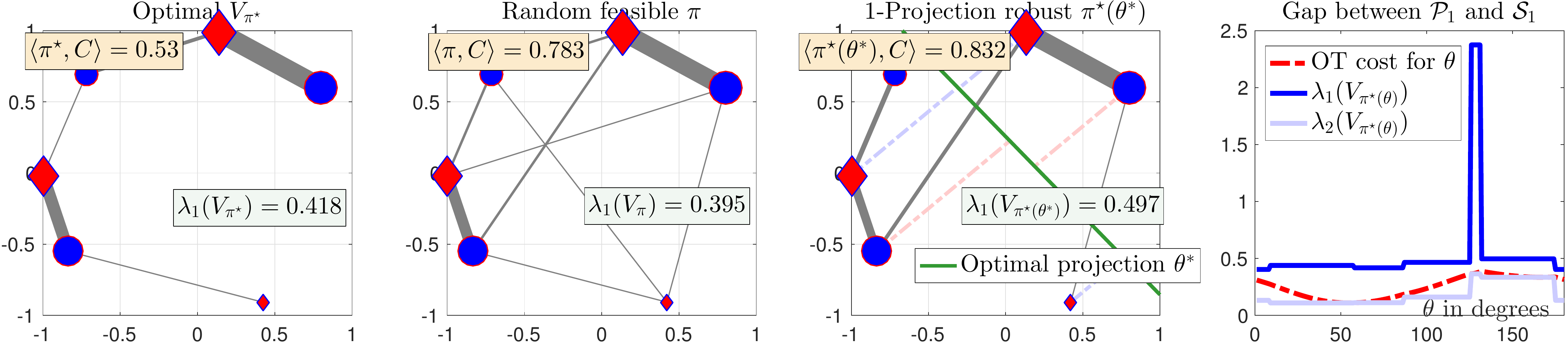}
	\caption{This figure should be compared to Figure~\ref{fig:lasuperfigure}. We also present an example for which the explicit computation of projection $\ProjPWasserstein{k}$ and subspace $\ProjWasserstein{k}$ robust Wasserstein distances described in \S3 can be carried out explicitly, by simple enumeration. Unlike Figure~\ref{fig:lafigurenulle}, and as can be seen in the rightmost plot, these two quantities do not coincide here. That plot reveals that the minimum across all  maximal eigenvalues of second order moment matrices computed on all optimal OT plans obtained by enumerating all lines (the subspace robust quantity) is strictly larger than the worst possible projection cost.}\label{fig:lafigurenulle}
\end{figure*}

\section{Projection Robust Wasserstein Distances}
In this section, we prove some basic properties of projection robust Wasserstein distances $\ProjPWasserstein{k}$. First note that the definition of $\ProjPWasserstein{k}$ makes sense, since for any $\mu,\nu \in \PdRd$, $k \in \range{d}$ and $E \in \G_k$, ${P_E}_\#\mu$ and ${P_E}_\#\nu$ have a second moment (for orthogonal projections are $1$-Lipschitz).

$\ProjPWasserstein{k}$ is also well posed, since one can prove the existence of a maximizing subspace. To prove this, we will need the following lemma stating that the admissible set of couplings between the projected measures are exactly the projections of the admissible couplings between the original measures:
\begin{lemma} \label{lemma:projected_transport_plans}
Let $f: \Rd \to \Rd$ Borel and $\mu,\nu \in \PRd$. Then $\Pi( f_\#\mu, f_\#\nu ) = \left\{ (f \otimes f)_\#\pi \,|\, \pi \in \Pi(\mu, \nu) \right\}$.
\end{lemma}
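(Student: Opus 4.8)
\emph{Proof plan.} The plan is to establish the two inclusions separately. The inclusion ``$\supseteq$'', i.e. $(f\otimes f)_\#\pi\in\Pi(f_\#\mu,f_\#\nu)$ for every $\pi\in\Pi(\mu,\nu)$, is immediate from the definition of pushforward: writing $f\otimes f$ for the map $(x,y)\mapsto(f(x),f(y))$, one has $(f\otimes f)^{-1}(A\times\Rd)=f^{-1}(A)\times\Rd$ for Borel $A\subset\Rd$, so $\big((f\otimes f)_\#\pi\big)(A\times\Rd)=\pi\big(f^{-1}(A)\times\Rd\big)=\mu\big(f^{-1}(A)\big)=(f_\#\mu)(A)$, and symmetrically for the second marginal.

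For the reverse inclusion ``$\subseteq$'', fix $\gamma\in\Pi(f_\#\mu,f_\#\nu)$; I must produce $\pi\in\Pi(\mu,\nu)$ with $(f\otimes f)_\#\pi=\gamma$. Since $\Rd$ is Polish, I would invoke the disintegration theorem for the Borel map $f$: there exist Borel families $(\mu_z)_{z\in\Rd}$ and $(\nu_w)_{w\in\Rd}$ of probability measures on $\Rd$ with $\mu_z$ concentrated on the fibre $f^{-1}(\{z\})$ for $(f_\#\mu)$-a.e.\ $z$ (resp.\ $\nu_w$ on $f^{-1}(\{w\})$ for $(f_\#\nu)$-a.e.\ $w$) and $\mu=\int\mu_z\,d(f_\#\mu)(z)$, $\nu=\int\nu_w\,d(f_\#\nu)(w)$. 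I then define
\[
	\pi:=\int_{\Rd\times\Rd}(\mu_z\otimes\nu_w)\,d\gamma(z,w),
\]
which is a bona fide probability measure on $\Rd\times\Rd$ because $(z,w)\mapsto\mu_z\otimes\nu_w$ is a probability kernel and $\gamma$ a probability measure.

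It then remains to check two things. First, the marginals: $\pi(A\times\Rd)=\int\mu_z(A)\,\nu_w(\Rd)\,d\gamma(z,w)=\int\mu_z(A)\,d(f_\#\mu)(z)=\mu(A)$, using that the first marginal of $\gamma$ is $f_\#\mu$, and similarly $\pi(\Rd\times B)=\nu(B)$; hence $\pi\in\Pi(\mu,\nu)$. Second, that $(f\otimes f)_\#\pi=\gamma$: for Borel $C,D\subset\Rd$, concentration of $\mu_z$ on $f^{-1}(\{z\})$ gives $\mu_z(f^{-1}(C))=\mathbbm{1}_C(z)$ for $(f_\#\mu)$-a.e.\ $z$ and likewise $\nu_w(f^{-1}(D))=\mathbbm{1}_D(w)$, so
\[
	\big((f\otimes f)_\#\pi\big)(C\times D)=\pi\big(f^{-1}(C)\times f^{-1}(D)\big)=\int\mathbbm{1}_C(z)\,\mathbbm{1}_D(w)\,d\gamma(z,w)=\gamma(C\times D).
\]
Since measurable rectangles form a $\pi$-system generating the Borel $\sigma$-algebra of $\Rd\times\Rd$, a monotone-class argument upgrades this to equality of measures.

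I expect the only genuine subtlety to be the correct invocation of the disintegration theorem — in particular the Borel measurability of $z\mapsto\mu_z$ (needed for $\pi$ to be well defined) and the a.e.\ concentration of $\mu_z$ on the fibre $f^{-1}(\{z\})$ (needed for the final display); both are standard consequences of $\Rd$ being a standard Borel space. Everything else is routine manipulation of pushforwards and Fubini's theorem for kernels.
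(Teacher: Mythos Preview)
Your argument is correct; it is the standard gluing construction via disintegration of $\mu$ and $\nu$ along the fibres of $f$. The paper proceeds differently for the nontrivial inclusion: given $\rho\in\Pi(f_\#\mu,f_\#\nu)$ it writes down an explicit set function on rectangles,
\[
\pi(A\times B)=\frac{\rho\big(f(A)\times f(B)\big)\,\mu(A)\,\nu(B)}{f_\#\mu\big(f(A)\big)\,f_\#\nu\big(f(B)\big)},
\]
and verifies the marginal and pushforward identities directly on such rectangles. Your route is the more robust of the two: it sidesteps two genuine subtleties in the paper's formula --- that $f(A)$ need not be Borel for a merely Borel map $f$, and, more seriously, that the displayed set function on rectangles is not obviously countably additive (indeed one can check it fails finite additivity already for discrete examples when fibres of $f$ have unequal mass), a point the paper does not address. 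The disintegration theorem absorbs all of these issues at the cost of being less elementary. For the actual applications in the paper, where $f$ is linear (an orthogonal projection $P_E$ or the map $\Omega^{1/2}$), both constructions are after the same coupling and either is adequate in spirit; but as a proof of the lemma in the stated generality, yours is on firmer ground.
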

This can be used to get the following result:
\begin{proposition} \label{prop:existence_optimal_subspaces}
For $\mu, \nu \in \PdRd$ and $k \in \range{d}$, there exists a subspace $E^* \in \G_k$ such that
\[
	\ProjPWasserstein{k}(\mu, \nu) = \Wd\left({P_{E^*}}_\#\mu , {P_{E^*}}_\#\nu \right).
\]
\end{proposition}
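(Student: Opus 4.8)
The plan is to realize $\ProjPWasserstein{k}(\mu,\nu)=\sup_{E\in\G_k}\Wd({P_E}_\#\mu,{P_E}_\#\nu)$ as the supremum of a continuous (or merely upper semicontinuous) function over the \emph{compact} Grassmannian $\G_k$, and then invoke the extreme value theorem. Concretely, I would first identify $\G_k$ with the set of rank-$k$ orthogonal projectors $\{P_E : E\in\G_k\}\subset\Rdd$: this is a closed subset of $\Rdd$ (it is cut out by $P=P^T=P^2$ and $\tr(P)=k$) and a bounded one (since $\|P_E\|\le 1$), hence compact, and $E_n\to E^*$ in $\G_k$ means precisely $P_{E_n}\to P_{E^*}$. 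All projected measures ${P_E}_\#\mu,{P_E}_\#\nu$ have finite second moments, so the $\Wd$'s below are finite and the triangle inequality for $\Wd$ is available.

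Next I would check continuity of $E\mapsto \Wd({P_E}_\#\mu,{P_E}_\#\nu)$. The one non-formal point is that $\Wd$ is not automatically continuous along weakly convergent sequences, so rather than argue via weak convergence I would produce an explicit, generally sub-optimal, coupling: for $E,E'\in\G_k$, the image of $\mu$ under $x\mapsto(P_E x,P_{E'}x)$ belongs to $\Pi({P_E}_\#\mu,{P_{E'}}_\#\mu)$, whence, using $\mu\in\PdRd$,
\[
\Wd\bigl({P_E}_\#\mu,{P_{E'}}_\#\mu\bigr)^2 \le \int \bigl\|(P_E-P_{E'})x\bigr\|^2\,d\mu(x) \le \|P_E-P_{E'}\|^2\!\int\|x\|^2\,d\mu(x),
\]
and likewise for $\nu$. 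Combining this with the triangle inequality for $\Wd$ shows $E\mapsto\Wd({P_E}_\#\mu,{P_E}_\#\nu)$ is continuous on $\G_k$; being continuous on a compact set, it attains its maximum, and any maximizer is the desired $E^*$.

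An alternative route, and presumably the one that motivates stating Lemma~\ref{lemma:projected_transport_plans}, is to rewrite $\Wd^2({P_E}_\#\mu,{P_E}_\#\nu)$ using that lemma together with the identity $\|P_E(x-y)\|^2=(x-y)^T P_E(x-y)$ as $\inf_{\pi\in\Pi(\mu,\nu)}\langle P_E,V_\pi\rangle$, where $\langle P_E,V_\pi\rangle=\tr(P_E V_\pi)$ is finite because $\mu,\nu\in\PdRd$ forces $V_\pi$ to have finite entries; this gives
\[
\ProjPWasserstein{k}^2(\mu,\nu) = \sup_{E\in\G_k}\ \inf_{\pi\in\Pi(\mu,\nu)}\ \langle P_E,V_\pi\rangle .
\]
For each fixed $\pi$ the map $E\mapsto\langle P_E,V_\pi\rangle$ is continuous (indeed linear in $P_E$), so the pointwise infimum over $\pi$ is upper semicontinuous on $\G_k$, and an upper semicontinuous function on a compact set attains its supremum, yielding $E^*$ again. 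In either route the only genuine subtleties are the push-forward/coupling identity of Lemma~\ref{lemma:projected_transport_plans} and the use of finite second moments to dominate the relevant integrals; everything else is the standard Weierstrass compactness argument, so I do not expect a serious obstacle here.
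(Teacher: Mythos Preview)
Your proposal is correct. The second route you outline---rewriting $\Wd^2({P_E}_\#\mu,{P_E}_\#\nu)$ via Lemma~\ref{lemma:projected_transport_plans} as $\inf_{\pi\in\Pi(\mu,\nu)}\langle P_E,V_\pi\rangle$ and then observing that an infimum of continuous functions is upper semicontinuous on the compact Grassmannian---is exactly the paper's proof.

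Your first route, however, is a genuinely different and somewhat sharper argument. By exhibiting the explicit coupling $(P_E,P_{E'})_\#\mu$ and bounding $\Wd({P_E}_\#\mu,{P_{E'}}_\#\mu)\le\|P_E-P_{E'}\|\,\bigl(\int\|x\|^2\,d\mu\bigr)^{1/2}$, you establish that $E\mapsto\Wd({P_E}_\#\mu,{P_E}_\#\nu)$ is in fact \emph{Lipschitz continuous} on $\G_k$, not merely upper semicontinuous. This approach is more elementary in that it bypasses Lemma~\ref{lemma:projected_transport_plans} entirely and uses only the triangle inequality for $\Wd$ together with a single suboptimal coupling; the price is that it does not yield the reformulation $\Wd^2({P_E}_\#\mu,{P_E}_\#\nu)=\min_{\pi\in\Pi(\mu,\nu)}\int\|P_E(x-y)\|^2\,d\pi$, which the paper uses elsewhere. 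Either route suffices for the proposition at hand.
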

\begin{proof}
The Grassmannian $\G_k$ is compact, and we show that the application $E \mapsto \Wd\left( {P_E}_\#\mu , {P_E}_\#\nu \right)$ is upper semicontinuous, which gives existence.
\end{proof}

Note that we could define projection robust Wasserstein distances for any $p \geq 1$ by:
\[
	\sup_{E \in \G_k} \Wd_p \left( {P_E}_\#\mu , {P_E}_\#\nu \right).
\]
Then there is still existence of optimal subspaces, and it defines a distance over
\[
	\PpRd = \left\{ \mu \in \PRd \,\bigg|\, \int \|x\|^p \,d\mu(x) < \infty \right\}.
\]

To prove the identity of indiscernibles, we use the following Lemma due to R\'enyi, generalizing Cram\'er-Wold theorem:
\begin{lemma}\label{lemma:renyi-cramer-wold}
Let $\left(E_j\right)_{j \in J}$ be a family of subspaces of $\Rd$ such that $\bigcup_{j \in J} E_j= \Rd$. Let $\mu, \nu \in \PRd$ such that for all $j \in J$, ${P_{E_j}}_\#\mu = {P_{E_j}}_\#\nu$. Then $\mu = \nu$.
\end{lemma}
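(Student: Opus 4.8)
The plan is to reduce the statement to the injectivity of the Fourier transform on $\PRd$. For $\mu \in \PRd$ write $\hat\mu(\xi) = \int_{\Rd} e^{i\dotp{\xi}{x}}\,d\mu(x)$ for its characteristic function, and similarly $\hat\nu$. Since a Borel probability measure on $\Rd$ is uniquely determined by its characteristic function, it suffices to prove that $\hat\mu(\xi) = \hat\nu(\xi)$ for every $\xi \in \Rd$.

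The key elementary observation is that if $\xi$ belongs to a subspace $E$, then $\dotp{\xi}{x} = \dotp{\xi}{P_E x}$ for all $x \in \Rd$, because $x - P_E x$ is orthogonal to $E \ni \xi$. Fix $j \in J$ and $\xi \in E_j$. Applying this identity and the change-of-variables (pushforward) formula gives
\[
	\hat\mu(\xi) = \int_{\Rd} e^{i\dotp{\xi}{P_{E_j}x}}\,d\mu(x) = \int_{E_j} e^{i\dotp{\xi}{z}}\,d\left({P_{E_j}}_\#\mu\right)(z) = \widehat{{P_{E_j}}_\#\mu}(\xi),
\]
and the same computation yields $\hat\nu(\xi) = \widehat{{P_{E_j}}_\#\nu}(\xi)$. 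Since by hypothesis ${P_{E_j}}_\#\mu = {P_{E_j}}_\#\nu$, we get $\hat\mu(\xi) = \hat\nu(\xi)$ for all $\xi \in E_j$. Because $\bigcup_{j \in J} E_j = \Rd$, every $\xi \in \Rd$ is covered, so $\hat\mu \equiv \hat\nu$ on $\Rd$ and hence $\mu = \nu$. (In the intended application one takes $J = \G_k$, and the covering hypothesis holds for any $k \in \range{d}$ since a single nonzero vector already spans a line.)

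There is essentially no hard step here: the argument only uses the orthogonality identity above, the change-of-variables formula for pushforwards, and the classical fact that characteristic functions separate points of $\PRd$. An alternative route avoiding Fourier analysis would be an induction on $d$ via disintegration of measures along a well-chosen subspace, but the characteristic-function argument is by far the cleanest and is what I would write up. $\square$
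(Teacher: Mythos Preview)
Your proof is correct and follows essentially the same route as the paper: both arguments show that the characteristic functions of $\mu$ and $\nu$ agree on each $E_j$ (via the identity $\dotp{\xi}{x}=\dotp{\xi}{P_{E_j}x}$ for $\xi\in E_j$, equivalently $\dotp{t}{P_{E_j}x}=\dotp{P_{E_j}t}{x}$) and then conclude by the covering hypothesis and injectivity of the Fourier transform. Your write-up is in fact slightly more direct, fixing $\xi\in E_j$ from the outset rather than passing through $P_{E_j}t$, but the content is identical.
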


\newpage
\section{Proofs}

\paragraph{Proof of Lemma~\ref{lemma:infsup_is_minmax}.}
For $\pi \in \Pi(\mu, \nu)$, the application $E \mapsto \int \|P_E(x)-P_E(y)\|^2 \,d\pi(x,y)$ is continuous and $\G_k$ is compact, so the supremum is a maximum. Moreover, the application $\pi \mapsto \max_{E \in \G_k} \int \|P_E(x)-P_E(y)\|^2 \,d\pi(x,y)$ is lower semicontinuous as the maximum of lower semicontinuous functions. Since $\Pi(\mu, \nu)$ is compact (for any sequence in $\Pi(\mu,\nu)$ is tight), the infimum is a minimum.

\paragraph{Proof of Lemma~\ref{lemma:dual_formulation}.}
A classical variational result by~\cite{fan1949theorem} states that
\[
	 \sum_{l=1}^k \lambda_l(V_\pi) = \max_{\substack{U \in \R^{k \times d}\\UU^T = I_k}} \tr\left( U V_\pi U^T \right).
\]
Then using the linearity of the trace:
\[
	\begin{split}
	 	\sum_{l=1}^k \lambda_l(V_\pi) &= \max_{\substack{U \in \R^{k \times d}\\UU^T = I_k}} \int \tr\left[U(x-y)(x-y)^TU^T\right] \,d\pi(x,y) \\
		&= \max_{\substack{U \in \R^{k \times d}\\UU^T = I_k}} \int \|U(x-y)\|^2 \,d\pi(x,y) \\
		&= \max_{E \in \G_k} \int \|P_E(x) - P_E(y)\|^2 \,d\pi(x,y).
	\end{split}
\]
Taking the minimum over $\pi \in \Pi(\mu,\nu)$ yields the result.

\paragraph{Proof of Theorem~\ref{theo:convex_relaxation}.}

$\ProjWasserstein{k}^2(\mu, \nu)$ = \eqref{eqn:minmaxOmega} :
We fix $\pi \in \Pi(\mu, \nu)$ and focus on the inner maximization in~\eqref{eqn:minmaxOmega} :
\[
	\max_{\substack{0 \preceq \Omega \preceq I\\\tr(\Omega) = k}}  \int d_\Omega^2 \,d\pi = \max_{\substack{0 \preceq \Omega \preceq I\\\tr(\Omega) = k}}  \left\langle \Omega \,|\, V_\pi \right\rangle.
\]
A result by~\cite{overton1993optimality} shows that this is equal to
\[
	\max_{\substack{U \in \R^{k \times d}\\UU^T = I_k}}  \tr\left( U V_\pi U^T \right)
\]
which is nothing but the sum of the $k$ largest eigenvalues of $V_\pi$ by Fan's result. By lemma~\ref{lemma:dual_formulation}, taking the minimum over $\pi \in \Pi(\mu, \nu)$ yields the result.

\eqref{eqn:minmaxOmega} = \eqref{eqn:maxOmegamin} : We will use Sion's minimax theorem to interchange the minimum and the maximum. Put $f(\Omega, \pi) = \int d_\Omega^2 \,d\pi$ and $\mathcal{R} = \left\{ \Omega \in \Rdd \,|\, 0 \preceq \Omega \preceq I \,;\, \tr(\Omega) = k \right\}$. Note that $\mathcal{R}$ is convex and compact, and $\Pi(\mu,\nu)$ is convex (and actually compact, but we won't need it here). Moreover, $f$ is bilinear and for any $\pi \in \Pi(\mu,\nu)$, $f(\cdot, \pi)$ is continuous. Let $\Omega \in \mathcal{R}$. Let us show that $f(\Omega, \cdot)$ is lower semicontinuous for the weak convergence. Let $(\phi_j)_{j \in \N}$ be an increasing sequence of bounded continuous functions, converging pointwise to $d_\Omega^2$. Then $f(\Omega, \pi) = \sup_{j \in \N} \int \phi_j \,d\pi$. For $j \in \N$, $\phi_j$ is continuous and bounded, so $\pi \mapsto \int \phi_j \,d\pi$ is continuous for the weak convergence. Then $f(\Omega, \cdot)$ is lower semicontinuous as the supremum of continuous functions.
Then by Sion's minimax theorem,
\[
	 \min_{\pi \in \Pi(\mu,\nu)} \max_{\Omega \in \mathcal{R}} f(\Omega, \pi) = \max_{\Omega \in \mathcal{R}} \min_{\pi \in \Pi(\mu,\nu)} f(\Omega, \pi)
\]
which is exactly \eqref{eqn:minmaxOmega} = \eqref{eqn:maxOmegamin}.

\eqref{eqn:maxOmegamin} = \eqref{eqn:maxOmegaWass} : Fix $\Omega \in \mathcal{R}$. One has:
\[
	\begin{split}
		&\min_{\pi \in \Pi(\mu, \nu)} \int d_\Omega^2 \,d\pi = \min_{\pi \in \Pi(\mu, \nu)} \int \|\Omega^{1/2}(x-y)\|^2 \,d\pi(x,y) \\
		&= \min_{\pi \in \Pi(\mu, \nu)} \int \|x-y\|^2 \,d\left[\Omega^{1/2} \otimes \Omega^{1/2} \right]_\#\pi(x,y) \\
		&= \min_{\rho \in \Pi(\Omega^{1/2}_\#\mu,\, \Omega^{1/2}_\#\nu)} \int \|x-y\|^2 \,d\rho(x,y) \\
		&= \Wd^2\left(\Omega^{1/2}_\#\mu, \Omega^{1/2}_\#\nu\right)
	\end{split}
\]
where we have used Lemma~\ref{lemma:projected_transport_plans}. Taking the maximum over $\Omega \in \mathcal{R}$ gives the result.


\paragraph{Proof of Lemma~\ref{lemma:tight_bounds}.}

We use the fact that the pushforward by $f$ of a Dirac at $x$ is the Dirac at $f(x)$, and that the $\Wd$ distance between two Diracs is the Euclidean distance between the points:
\[
	\begin{split}
		\ProjWasserstein{k}(\delta_x, \delta_y) &= \max_{\substack{0 \preceq \Omega \preceq I\\\tr(\Omega) = k}} \Wd \left( {\Omega^{1/2}}_\#\delta_x , {\Omega^{1/2}}_\#\delta_y \right) \\
		&= \max_{\substack{0 \preceq \Omega \preceq I\\\tr(\Omega) = k}} \| {\Omega^{1/2}} (x-y) \|.
	\end{split}
\]
Since $\| {\Omega^{1/2}} (x-y) \| \leq \|x-y\|$ with equality for any orthogonal projection matrix $\Omega$ onto a subspace $E \in \G_k$ such that $\textrm{span}(y-x) \subset E$, the result follows.


\paragraph{Proof of Proposition~\ref{prop:equivalent_metrics}.}
Let $k \in \range{d}$ and $\mu, \nu \in \PdRd$. Let us prove the upper bound on $\ProjWasserstein{k}$. Using the change of variable formula and the fact that for any $\Omega \in \left\{ \Omega \in \Rdd \,|\, 0 \preceq \Omega \preceq I \,;\, \tr(\Omega) = k \right\}$, $\Omega^{1/2}$ is $1$-Lipschitz,
\[
	\begin{split}
 		&\ProjWasserstein{k}^2(\mu, \nu) \\
 		&= \displaystyle \adjustlimits \max_{\substack{0 \preceq \Omega \preceq I\\\tr(\Omega) = k}} \min_{\pi \in \Pi(\mu, \nu)} \int \|\Omega^{1/2}(x-y)\|^2 \,d\pi(x,y) \\
		&\leq \displaystyle \adjustlimits \max_{\substack{0 \preceq \Omega \preceq I\\\tr(\Omega) = k}} \min_{\pi \in \Pi(\mu, \nu)} \int \|x - y\|^2 \,d\pi(x,y) \\
		&= \Wd^2(\mu, \nu)
	\end{split}
\]
which gives the upper bound. For the lower bound, we define $\mathcal{B}_k \subset \G_k$ the (finite) set of $k$-dimensional subspaces of $\Rd$ spanned by $k$ vectors of the canonical basis of $\Rd$:
\[
	\mathcal{B}_k = \left\{ \textrm{span}(e_{\sigma(1)}, ..., e_{\sigma(k)}) \,|\, \sigma \in \mathfrak{S}_d \right\}.
\]
Let us now bound $\ProjWasserstein{k}$ from below:
\[
	\begin{split}
 		&\ProjWasserstein{k}^2(\mu, \nu) \\
    		&= \adjustlimits \min_{\pi \in \Pi(\mu, \nu)} \max_{\substack{0 \preceq \Omega \preceq I\\\tr(\Omega) = k}} \int \|\Omega^{1/2}(x-y)\|^2 \,d\pi(x,y) \\
    		&\geq \adjustlimits \min_{\pi \in \Pi(\mu, \nu)}\max_{E \in \mathcal{B}_k} \int \|P_E(x)-P_E(y)\|^2 \,d\pi(x,y) \\
    		&= \min_{\pi \in \Pi(\mu, \nu)} \max_{\substack{A \subset \range{d}\\|A|=k}} \int \sum_{i \in A} (x_i - y_i)^2 \,d\pi(x,y) \\
    		&= \min_{\pi \in \Pi(\mu, \nu)} \max_{\substack{A \subset \range{d}\\|A|=k}} \sum_{i \in A} \int (x_i - y_i)^2 \,d\pi(x,y).
	\end{split}
\]
For $\pi \in \Pi(\mu,\nu)$,
\[
	\max_{\substack{A \subset \range{d}\\|A|=k}} \sum_{i \in A} \int (x_i - y_i)^2 \,d\pi(x,y)
\]
is the sum of the $k$ largest elements of $I=\left\{ \int (x_i - y_i)^2 \,d\pi(x,y) \,|\, i \in \range{d} \right\}$, so it is greater than $\frac{k}{d}$ times the sum of all the elements in $I$:
\[
	\begin{split}
 		\ProjWasserstein{k}^2(\mu, \nu) &\geq \frac{k}{d} \min_{\pi \in \Pi(\mu, \nu)} \int \|x-y\|^2 \,d\pi(x,y) \\
  		&= \displaystyle \frac{k}{d} \Wd^2(\mu, \nu).
	\end{split}
\]

Note that in the case of $\mu=\delta_0$ and $\nu=\sigma$, the two inequalities in the proof of the lower bound are equalities, hence the tight lower bound constant.


\paragraph{Proof of Lemma~\ref{lemma:increasing_concave_in_K}.}
Let us first prove the lower bound:
\[
\begin{array}{ccl}
    \ProjWasserstein{k+1}^2(\mu,\nu)
    &=&
    \displaystyle \sum_{l=1}^k \lambda_l(V_{\pi_{k+1}}) + \lambda_{k+1}(V_{\pi_{k+1}})\\
    &\geq&
    \displaystyle \sum_{l=1}^k \lambda_l(V_{\pi_k}) + \lambda_{k+1}(V_{\pi_{k+1}}) \\
    &=&
    \ProjWasserstein{k}^2(\mu, \nu) + \lambda_{k+1}(V_{\pi_{k+1}}).
\end{array}
\]
Let us now prove the upper bound:
\[
\begin{array}{ccl}
    \ProjWasserstein{k+1}^2(\mu,\nu) 
    &=&
    \displaystyle \min_{\pi \in \Pi(\mu, \nu)} \sum_{l=1}^{k+1} \lambda_l(V_\pi) \\
    &\leq&
    \displaystyle \sum_{l=1}^{k+1} \lambda_l(V_{\pi_k}) \\
    &=&
    \ProjWasserstein{k}^2(\mu, \nu) + \lambda_{k+1}(V_{\pi_k}).
\end{array}
\]


\paragraph{Proof of Propositon~\ref{prop:dependence_dimension}.}
Increase is direct using lemma~\ref{lemma:increasing_concave_in_K}, since for any $\pi \in \Pi(\mu,\nu)$, $V_\pi$ has only nonnegative eigenvalues. \newline
Let $k \in \range{d-2}$. Then using twice lemma~\ref{lemma:increasing_concave_in_K},
\[
\begin{split}
    \ProjWasserstein{k+2}^2(\mu,\nu) &- \ProjWasserstein{k+1}^2(\mu,\nu) \\
    &\leq \lambda_{k+2}(V_{\pi_{k+1}}) \\
    &\leq \lambda_{k+1}(V_{\pi_{k+1}}) \\
    &\leq \ProjWasserstein{k+1}^2(\mu, \nu) - \ProjWasserstein{k}^2(\mu, \nu),
\end{split}
\]
which shows that $k \mapsto \ProjWasserstein{k}^2(\mu,\nu)$ is concave. \newline
Let $k \in \range{d-1}$. Although the minoration of $\ProjWasserstein{k+1}^2(\mu,\nu) - \ProjWasserstein{k}^2(\mu,\nu)$ is a direct consequence of concavity, we give a direct computation using lemma~\ref{lemma:increasing_concave_in_K}:
\[
\begin{split}
    &\ProjWasserstein{k+1}^2(\mu,\nu) - \ProjWasserstein{k}^2(\mu,\nu) \\
    &\geq \lambda_{k+1}(V_{\pi_{k+1}}) \\
    &\geq \displaystyle \frac{1}{d-k-1} \sum_{l=k+2}^d \lambda_l(V_{\pi_{k+1}}) \\
    &= \displaystyle \frac{1}{d-k-1} \left[ \sum_{l=1}^d \lambda_l(V_{\pi_{k+1}}) - \sum_{l=1}^{k+1} \lambda_l(V_{\pi_{k+1}})\right] \\
    &\geq \displaystyle \frac{1}{d-k-1} \left[ \Wd^2(\mu,\nu) - \ProjWasserstein{k+1}^2(\mu,\nu) \right],
\end{split}
\]
which implies that
\[
\begin{split}
    (d-k) &\left[ \ProjWasserstein{k+1}^2(\mu,\nu) - \ProjWasserstein{k}^2(\mu,\nu) \right] \\
    & \geq \Wd^2(\mu,\nu) - \ProjWasserstein{k}^2(\mu,\nu).
\end{split}
\]

Finally, the majoration of $\ProjWasserstein{k}(\mu,\nu)$ is a direct consequence of lemma~\ref{lemma:increasing_concave_in_K}:
\[
\begin{split}
    \ProjWasserstein{k+1}^2(\mu,\nu) &\leq \ProjWasserstein{k}^2(\mu,\nu) + \lambda_{k+1}(V_{\pi_k}) \\
    &\leq \ProjWasserstein{k}^2(\mu,\nu) + \lambda_k(V_{\pi_k}) \\
    &\leq \ProjWasserstein{k}^2(\mu,\nu) + \frac{1}{k} \sum_{l=1}^k \lambda_l(V_{\pi_k}) \\
    &= \frac{k+1}{k} \ProjWasserstein{k}^2(\mu,\nu).
\end{split}
\]


\paragraph{Proof of Proposition~\ref{prop:geodesic_space}.}
For $s,t \in [0,1]$, put $\pi(s,t) = (f_s, f_t)_\#\pi^* \in \Pi(\mu_s, \mu_t)$, which is our candidate for an optimal transport plan. Then
\[
  \begin{split}
    &\ProjWasserstein{k}^2(\mu_s, \mu_t) \leq \sum_{l=1}^k \lambda_l(V_{\pi(s,t)}) \\
    &= \sum_{l=1}^k \lambda_l\left\{ \int \left[ f_s(x,y) - f_t(x,y) \right] \right. \\
    &\qquad \qquad \qquad \left. \left[ f_s(x,y) - f_t(x,y) \right]^T d\pi^*(x,y) \vphantom{\int} \right\} \\
    &= \sum_{l=1}^k \lambda_l\left( (t-s)^2 V_{\pi^*} \right) \\
    &= (t-s)^2 \ProjWasserstein{k}^2(\mu, \nu)
  \end{split}
\]
where we have used 
\[
	\begin{split}
		f_s(x,y) - f_t(x,y) &= (1-s)x + sy - (1-t)x - ty\\
		&= (t-s)(x-y).
	\end{split}
\]
Then for $0 \leq s < t \leq 1$, using the triangular inequality,
\[
  \begin{split}
      &\ProjWasserstein{k}(\mu, \nu) \\
      &\leq \ProjWasserstein{k}(\mu, \mu_s) + \ProjWasserstein{k}(\mu_s, \mu_t) + \ProjWasserstein{k}(\mu_t, \nu) \\
      &\leq \left( s + (t-s) + (1-t) \right) \ProjWasserstein{k}(\mu,\nu) = \ProjWasserstein{k}(\mu,\nu)
  \end{split}
\]
which implies equality everywhere, and in particular optimality for $\pi(s,t)$. Then for all $s,t \in [0,1]$,
\[
    \ProjWasserstein{k}(\mu_s, \mu_t) = |t-s| \ProjWasserstein{k}(\mu, \nu),
\]
which shows that the curve $(\mu_t)$ has constant speed 
\[
	|\mu_t'| = \lim_{\epsilon \to 0} \frac{\ProjWasserstein{k}(\mu_{t+\epsilon}, \mu_t)}{|\epsilon|} = \ProjWasserstein{k}(\mu, \nu),
\]
and that the length of the curve $(\mu_t)$ is 
\[
	\begin{split}
		&\sup\left\{ \sum_{i=0}^{n-1} \ProjWasserstein{k}(\mu_{t_i}, \mu_{t_{i+1}}) \,\bigg|\, 
		\begin{array}{c}
			n \geq 1\\
			0 = t_0 < ... < t_n = 1
		\end{array}
		\right\}\\
		&= \ProjWasserstein{k}(\mu, \nu),
	\end{split}
\]
\emph{i.e.} that $(\mu_t)$ is a geodesic connecting $\mu$ and $\nu$.

\paragraph{Proof of Lemma~\ref{lemma:optimalOmega_fixedpi}.}
Although this is a direct consequence of~\cite{overton1993optimality}, we give an explicit proof.
Fix $\pi \in \Pi(\mu,\nu)$. Using the linearity of the trace,
\[
	\max_{\substack{0 \preceq \Omega \preceq I\\\tr(\Omega) = k}}  \int d_\Omega^2 \,d\pi = \max_{\substack{0 \preceq \Omega \preceq I\\\tr(\Omega) = k}} \tr(\Omega V_\pi),
\]
which is a SDP. Its dual writes
\[
	\min_{\substack{s \in \R, Z \in \Rdd\\ Z \succeq 0\\Z+sI \succeq V_\pi}} \tr(Z) + ks.
\]
Let us write the eigendecomposition of $V_\pi = U \textrm{diag}(\lambda_1, ..., \lambda_d) U^T$ with $\lambda_1 \geq ... \geq \lambda_d$. Put $\widehat{\Omega} = U \diag\left([\ones_k, \zeros_{d-k}]\right) U^T$, $\widehat{Z} = U \textrm{diag}((\lambda_1-\lambda_k) _+, ..., (\lambda_d - \lambda_k)_+)U^T$ and $\widehat{s} = \lambda_k$. Then $0 \preceq \widehat\Omega \preceq I$, $\tr(\widehat\Omega) = k$ and $(\widehat{s}, \widehat{Z})$ is admissible for the dual problem, with corresponding primal and dual values
\[
	\begin{split}
		&\tr(\widehat{\Omega} V_\pi) = \sum_{l=1}^k \lambda_l, \\
		&\tr(\widehat{Z}) + k\widehat{s} = \sum_{l=1}^k (\lambda_l - \lambda_k) + k\lambda_k = \sum_{l=1}^k \lambda_l.
	\end{split}	
\]
We found primal and dual admissible variables that give the same value, so these variables are optimal. In particular, $\widehat\Omega$ is solution to 
\[
	\max_{\substack{0 \preceq \Omega \preceq I\\\tr(\Omega) = k}}  \int d_\Omega^2 \,d\pi.
\]

\paragraph{Proof of Lemma~\ref{lemma:projected_transport_plans}.}
Let $E = \text{Im}(f) \subset \Rd$ and $\pi \in \Pi(\mu, \nu)$. Then $(f \otimes f)_\#\pi$ is an admissible transport plan between $f_\#\mu$ and $f_\#\nu$. Indeed, for any Borel set $A \subset E$, $(f \otimes f)_\#\pi(A \times E) = \pi(f^{-1}(A) \times f^{-1}(E)) = \pi(f^{-1}(A) \times \Rd) = \mu(f^{-1}(A)) = f_\#\mu(A)$, so $(f \otimes f)_\#\pi$ has first marginal $f_\#\mu$, and likewise, has second marginal $f_\#\nu$, \emph{i.e.} $(f \otimes f)_\#\pi \in \Pi( f_\#\mu, f_\#\nu )$. \newline
Conversely, let $\rho \in \Pi( f_\#\mu, f_\#\nu )$. Let us construct $\pi \in \Pi(\mu, \nu)$ such that $(f \otimes f)_\#\pi = \rho$. For any Borel sets $A,B \subset \Rd$, put
\[
	\pi(A \times B) = \frac{ \rho(f(A) \times f(B)) \mu(A) \nu(B)}{f_\#\mu(f(A)) \, f_\#\nu(f(B))}
\]
if $f_\#\mu(f(A)) \neq 0$ and $f_\#\mu(f(B)) \neq 0 $, and $\pi(A,B) = 0$ otherwise.
Then $\pi \in \Pi(\mu, \nu)$. Indeed, for any Borel set $A \subset \Rd$, $\pi(A \times \Rd) = \rho(f(A) \times E) \frac{\mu(A)}{f_\#\mu(f(A))} = \mu(A)$ if $f_\#\mu(f(A)) \neq 0$ and $\pi(A \times \Rd) = 0$ if $f_\#\mu(f(A)) = 0$. But then, $\mu(A) \leq \mu(f^{-1}(f(A))) =  f_\#\mu(f(A)) = 0$ so $\mu(A) = 0 = \pi(A \times \Rd)$. The same calculations give the result for the second marginal. \newline
There remains to prove that $(f \otimes f)_\#\pi = \rho$. For any Borel sets $A,B \subset E$, noting that $f(f^{-1}(A)) = A$ and $f(f^{-1}(B)) = B$, 
\[
	\begin{split}
		(f \otimes f)_\#\pi(A \times B) &= \pi(f^{-1}(A) \times f^{-1}(B)) \\
		&= \rho(A \times B) \frac{\mu(f^{-1}(A))}{f_\#\mu(A)} \frac{\nu(f^{-1}(B))}{f_\#\mu(B)} \\
		&= \rho(A \times B)
	\end{split}
\]
if $f_\#\mu(A) \neq 0$ and $f_\#\nu(B) \neq 0$. Otherwise, $(f \otimes f)_\#\pi(A \times B) = 0$ and $\rho(A \times B) \leq \min\{ \rho(A \times E), \rho(E \times B) \} = \min\{ f_\#\mu(A), f_\#\nu(B)\} = 0$, so $\rho(A \times B) = (f \otimes f)_\#\pi(A \times B) = 0$. \quad $\square$

\paragraph{Proof of Proposition~\ref{prop:existence_optimal_subspaces}.}
We endow the Grassmannian $\G_k$ with the metric topology associated with metric $d: (E,F) \mapsto \|P_E - P_F\|$, where $P_E$ and $P_F$ are the linear projectors onto $E$ and $F$. Then it is well known that $\G_k$ is compact under this topology. 

We only have to show that, for $\mu, \nu \in \PdRd$, the map $f: E \mapsto \Wd\left( {P_E}_\#\mu , {P_E}_\#\nu \right)$ is upper semicontinuous. For any orthogonal projector $P$, using lemma~\ref{lemma:projected_transport_plans},
\[
	\!\begin{split}
		\Wd^2\left( P_\#\mu , P_\#\nu \right) &= \min_{\rho \in \Pi(P_\#\mu , P_\#\nu)} \int \|x-y\|^2 \,d\rho(x,y) \\
		&= \min_{\pi \in \Pi(\mu,\nu)} \int \|x-y\|^2 \,d(P_E\otimes P_E)_\#\pi(x,y) \\
		&= \min_{\pi \in \Pi(\mu,\nu)} \int \|P(x-y)\|^2 \,d\pi(x,y).
	\end{split}
\]
Since for $\pi \in \Pi(\mu,\nu)$, the application $P \mapsto  \int \|P(x-y)\|^2 \,d\pi(x,y)$ is continuous, the application $g: P \mapsto \Wd^2\left( P_\#\mu , P_\#\nu \right)$ is upper semicontinuous as the minimum of continuous functions. As the application $h: E \mapsto P_E$ is continuous, and $x \mapsto \sqrt{x}$ is nondecreasing, $f = \sqrt{g \circ h}$ is upper semicontinuous.

\paragraph{Proof of Lemma~\ref{lemma:renyi-cramer-wold}.}
Let $j \in J$. Since ${P_{E_j}}_\#\mu = {P_{E_j}}_\#\nu$, their characteristic functions are equal, \emph{i.e.} for all $t \in \Rd$,
\[
\begin{array}{lcl}
    \int \exp{i \langle t | x \rangle} \,d{P_{E_j}}_\#\mu(x)
    &=&
    \int \exp{i \langle t | x \rangle} \,d{P_{E_j}}_\#\nu(x) \\
    \int \exp{i \langle t | P_{E_j} x \rangle} \,d\mu(x)
    &=&
    \int \exp{i \langle t | P_{E_j} x \rangle} \,d\nu(x) \\
    \int \exp{i \langle P_{E_j} t | x \rangle} \,d\mu(x)
    &=&
    \int \exp{i \langle P_{E_j} t | x \rangle} \,d\nu(x)
\end{array}
\]
\emph{i.e.} the characteristic functions of $\mu$ and $\nu$ coincide on $E_j$, for all $j \in J$. Since the subspaces $\left(E_j\right)_{j \in J}$ cover the whole space $\Rd$, $\mu$ and $\nu$ have the same characteristic functions on $\Rd$, hence $\mu = \nu$.


\paragraph{Proof of the value of $\Wd^2(\mu,\nu)$ for the Disk to Annulus setup.}
Let us define a map $T$ using polar coordinates for the first two coordinates, and cartesian coordinates for the remaining $d-2$, as follows:
\[
	T\left(r, \theta, x_3, ..., x_d\right) = \left( \sqrt{4 + 5r^2}, \theta, x_3, ..., x_d \right).
\]
We show that $T$ is an optimal transport map between $\mu$ and $\nu$. First, we show that $T_\#\mu = \nu$. Since $T$ only operates on the first coordinate and $\mu$ and $\nu$ only differ on the first coordinate, we only have to prove that ${T_1}_\#\mu_1$ and $\nu_1$ have the same CDF, where $T_1$, $\mu_1$ and $\nu_1$ stand for the first coordinate projection of $T$, $\mu$ and $\nu$. For any $r \in [2,3]$:
\[
	\begin{split}
		\mathbb{P}_{R\sim\mu_1}(T_1(R) \leq r) &= \mathbb{P}_{R\sim\mu_1}(R \leq T_1^{-1}(r)) \\
		&= \mathbb{P}_{R\sim\mu_1}\left(R \leq \sqrt{\frac{r^2-4}{5}}\right) \\
		&= \frac{\int_0^{\sqrt{\frac{r^2-4}{5}}} x \,dx}{\int_0^1 x \,dx} = \frac{r^2-4}{5}
	\end{split}
\]
and
\[
	\mathbb{P}_{R\sim\nu_1}(R \leq r) = \frac{\int_2^r x \,dx}{\int_2^3 x \,dx} = \frac{r^2-4}{5}
\]
which shows that $T_\#\mu = \nu$. Moreover, $T$ is a subgradient of a convex function, since its gradient is semidefinite positive:
\[
	\nabla T(r,\theta,x_3,...,x_d) = \textrm{Diag}\left( \frac{5r}{\sqrt{4 + 5r^2}} , 1, ..., 1 \right) \succeq 0.
\]
Then by Brenier's theorem, $T$ is an optimal transport map between $\mu$ and $\nu$, and
\[
	\begin{split}
		\Wd^2(\mu, \nu) &= \int \|x-T(x)\|^2 \,d\mu(x) \\
		&= 2 \int_0^1 \left(r - \sqrt{4+5r^2}\right)^2 r \,dr \\
		&= \frac{14}{5} + \frac{8}{5 \sqrt{5}} \log\left( \frac{3 + \sqrt{5}}{2} \right) \approx 3.48865.
	\end{split}
\]

\newpage
\section{Experimental Details}

\subsection{Additional Experiment: Transport from Disk to Annulus}
Let $k^* \in \range{d}$. We now consider $\mu$ the uniform distribution over the $k^*$-dimensional disk embedded in $\Rd$,
\[
	\begin{split}
		\mu = \mathcal{U}(\{ x \in \Rd \,|\, &\|(x_1,...,x_{k^*})\| \leq 1, \\
		&x_i \in [0,1] \textrm{ for } i=(k^*+1),...,d \})
	\end{split}
\]
and $\nu$ the uniform distribution over a $k^*$-dimensional annulus (cylinder) embedded in $\Rd$,
\[
	\begin{split}
		\nu = \mathcal{U}(\{ x \in \Rd \,|\, & 2 \leq \|(x_1,...,x_{k^*})\| \leq 3, \\
		&x_i \in [0,1] \textrm{ for } i=(k^*+1),...,d \}).
	\end{split}
\]

We do the same experiments as for the fragmented hypercube. Based on two empirical distributions $\hat\mu$ from $\mu$ and $\hat\nu$ from $\nu$, we plot in Figure~\ref{fig:disktoannulus_curve_k} the sequence $k \mapsto \ProjWasserstein{k}^2(\hat\mu, \hat\nu)$, for different values of $k^*$. An ``elbow'' shows at $k=k^*$, because the last $d-k^*$ dimensions only represent noise, which is recovered in our plot.

\begin{figure}[h!]
	\centering
	\includegraphics[width=0.48\textwidth]{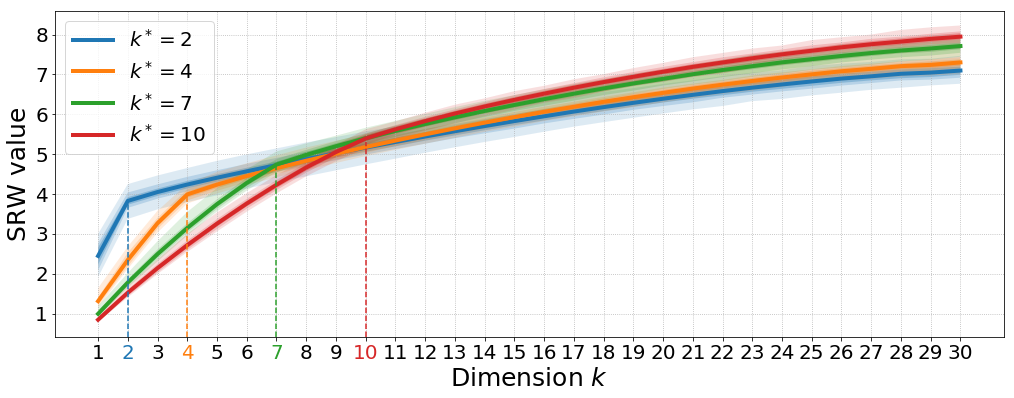}
 	\vskip-.4cm  
 	\caption{$\ProjWasserstein{k}^2(\hat\mu, \hat\nu)$ depending on the dimension $k \in \range{d}$, for $k^* \in \{2, 4, 7, 10\}$, where $\hat\mu, \hat\nu$ are empirical measures from $\mu$ and $\nu$ respectively with $100$ points each. Each curve is the mean over $100$ samples, and shaded area show the minimum and maximum values.}
 	\label{fig:disktoannulus_curve_k}
\end{figure}

We consider next $k^*=2$, and choose $k=2$. We will need to compute $\Wd^2(\mu, \nu)$. Although~\cite{weed19} seem to suggest that it is equal to $4$, we find a different value :
\[
	\Wd^2(\mu, \nu) = \frac{14}{5} + \frac{8}{5 \sqrt{5}} \log\left( \frac{3 + \sqrt{5}}{2} \right) \approx 3.48865.
\]
We plot in Figure~\ref{fig:disktoannulus_estimation_error} the estimation error $|\Wd^2(\mu, \nu) - \ProjWasserstein{k}^2(\hat\mu, \hat\nu)|$ depending on the number of points $n$ in the empirical measures $\hat\mu, \hat\nu$ from $\mu$ and $\nu$.  In Figure~\ref{fig:disktoannulus_subspace_estimation_error}, we plot the subspace estimation error $\|\Omega^* - \widehat\Omega\|$ depending on $n$, where $\Omega^*$ is the optimal projection matrix onto $\textrm{span}\{e_1, e_2\}$.
\begin{figure}[h!]
	\centering
	\includegraphics[width=0.48\textwidth]{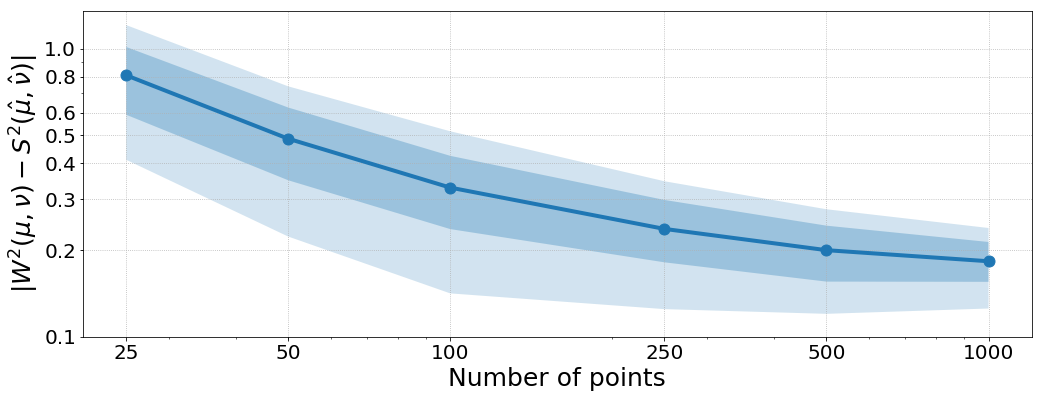}
	\vskip-.5cm
	\caption{Mean estimation error over $500$ random samples for $n \in \{25, 50, 100, 250, 500, 1000\}$. The shaded areas represent the 10\%-90\% and 25\%-75\% quantiles over the $500$ samples.}
    \label{fig:disktoannulus_estimation_error}
\end{figure}

\begin{figure}[h!]
	\centering
	\includegraphics[width=0.48\textwidth]{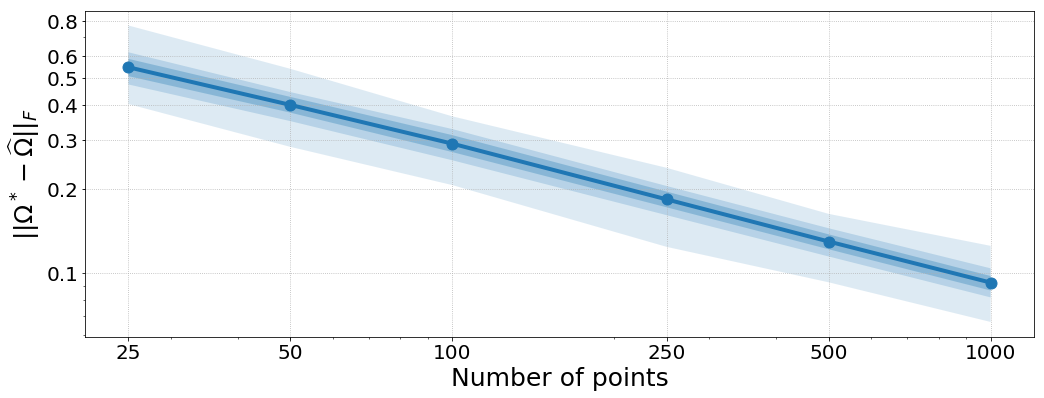}
	\vskip-.5cm
	\caption{Mean estimation of the subspace estimation error over $500$ samples, depending on $n \in \{25, 50, 100, 250, 500, 1000\}$. The shaded areas represent the 10\%-90\% and 25\%-75\% quantiles over the $500$ samples.}
	\label{fig:disktoannulus_subspace_estimation_error}
\end{figure}

We plot the optimal transport plan (in the sense of $\Wd$, Figure~\ref{fig:disktoannulus_map} left) and the optimal transport plan (in the sense of $\ProjWasserstein{2}$) between $\hat\mu$ and $\hat\nu$ (with $n=250$ points each, Figure~\ref{fig:disktoannulus_map} right).
\begin{figure}[h!]
	\centering
	\begin{subfigure}
       		 \centering
		\includegraphics[width=0.22\textwidth]{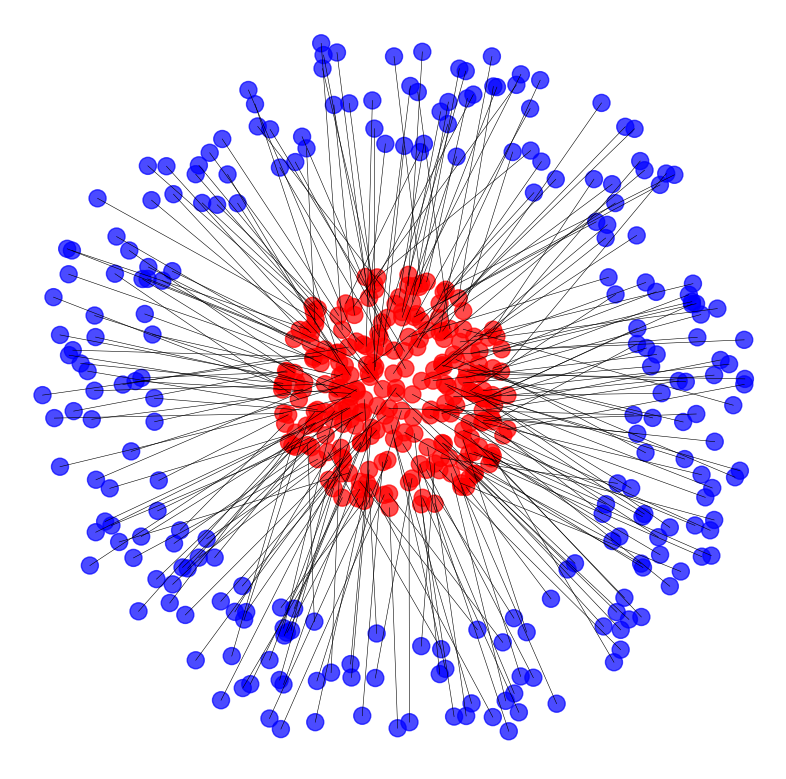}
    	\end{subfigure}%
    	~ 
   	 \begin{subfigure}
      		\centering
        		\includegraphics[width=0.22\textwidth]{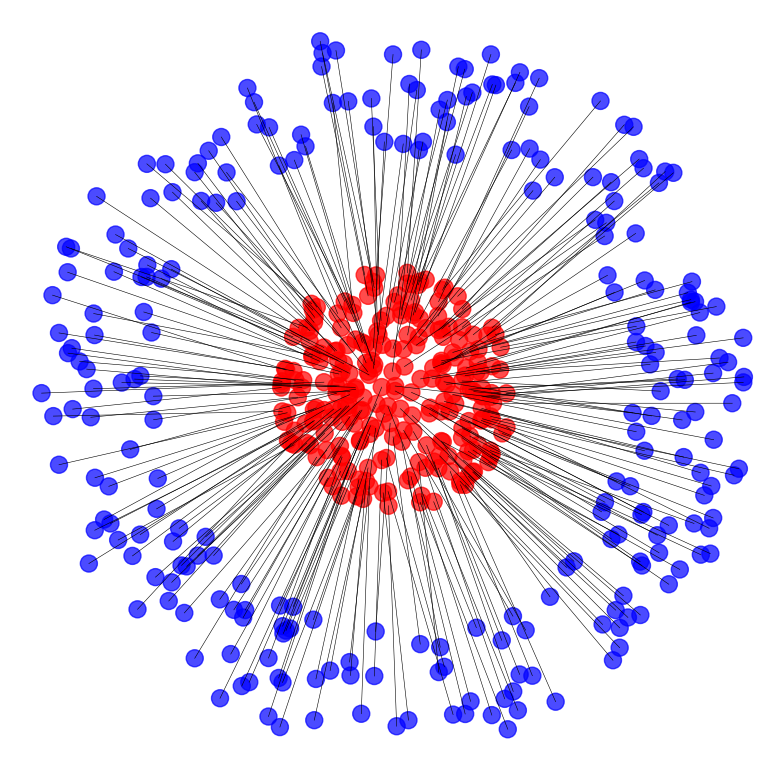}
   	 \end{subfigure}
\vskip-.5cm	 \caption{Disk to annulus, $n=250$, $d=30$. Optimal mapping in the Wasserstein space (left) and in the SRW space (right). Geodesics in the SRW space are robust to statistical noise.}
		\label{fig:disktoannulus_map}
\end{figure}

\subsection{Details about Experiment of Section \ref{exp:real}}
The complete vocabulary used consists of the $20000$ most common words in English, except for the $2000$ most common words, hence a total size of $18000$ words. All the words in a movie script that whether do not belong to the vocabulary list, are digits or begin with a capital letter, are deleted. The remaining words form a discrete measure in $\R^{300}$, with the weights proportional to their frequency in the movie script.

\end{document}